\newtheorem{thm}{Theorem}
\newtheorem{theorem}[thm]{Theorem}
\newtheorem{corollary}[thm]{Corollary}
\newtheorem{lemma}[thm]{Lemma}
\newtheorem{proposition}[thm]{Proposition}
\theoremstyle{definition}
\newtheorem{remark}[thm]{Remark}
\newtheorem{definition}[thm]{Definition}
\theoremstyle{definition}
\DeclareMathOperator*{\argmin}{arg\,min}
\begin{document}

\title[]{Asymmetric Impurity Functions, Class Weighting, and Optimal Splits for Binary Classification Trees}
\author[]{David Zimmermann}

\email{davidszimmermann@gmail.com}

\maketitle

\begin{abstract}
We investigate how asymmetrizing an impurity function affects the choice of optimal node splits when growing a decision tree for binary classification. In particular, we relax the usual axioms of an impurity function and show how
skewing an impurity function biases the optimal splits to isolate points of a particular class when splitting a node. We give a rigorous definition of this notion, then give a necessary and sufficient condition for such a bias
to hold. We also show that the technique of class weighting is equivalent to applying a specific transformation to the impurity function, and tie all these notions together for a class of impurity functions
that includes the entropy and Gini impurity. We also briefly discuss cost-insensitive impurity functions and give a characterization of such functions.   
\end{abstract}

\section{Introduction}

In supervised learning, decision trees and their related methods are among the most popular tools for classification. Their constructions are based on many criteria and parameters,
among them a chosen function to measure impurity of a node or dataset. This impurity function informs the optimal (greedy) split for a given node when growing the tree. (There are splitting criteria that are not based on impurity
functions, but we do not examine those here.) An impurity function satisfies certain axioms (which may slightly vary among different authors
and contexts), among them the property that the impurity function is symmetric in its entries. Intuitively, this condition says that an impurity function treats all classes equally during tree construction. For example, a dataset or tree node that consists
of 80\% Class 0 points and 20\% Class 1 points is equally ``impure" or of the same ``quality" as a tree node that consists of 20\% Class 0 points and 80\% Class 1 points. However, in many applications this is not necessarily desirable. A couple 
of contexts for which this may be the case:
\begin{itemize}
\item Imbalance in the number of occurrences of each class: If our dataset is highly imbalanced then detection of the rare class may be difficult. In this case one might, for example, consider an 80-20 mixture of points to be better
or more informative than a 20-80 mixture of the same size, depending on which class is the rare class.
\item Different costs for different misclassification types: The classic example of this is cancer detection, where the cost of a false negative is the death of a patient whereas the cost of a false positive (though often high) is not nearly as
catastrophic. In this case as well, the quality of an 80-20 mixture of points might be considered different from the quality of a 20-80 mixture of the same size.
\end{itemize}

Both of these situations arise frequently in practice, and the problem of dealing with them is well-studied \cite{HG08, JS02, LV13}. A common strategy that is used to deal with the first situation is oversampling or undersampling: one artificially increases the number of
samples of the rare class or decreases the number of samples of the common class in order to balance the prior class probabilities. There are many oversampling and undersampling techniques \cite{CBHK02,Ch09,HBGL08};
perhaps the simplest technique, which is the one we will consider in this paper, is class weighting: one simply scales the weights of all points of a chosen class by some fixed factor. A strategy that is used to deal with the second situation 
is to incorporate different misclassification costs into the impurity function itself \cite{BFOS84}. Along these lines, sensitivity of splitting criteria to different misclassification costs has been studied as well \cite{DH00, El01}.
Cost modification and class weighting are essentially just different perspectives on the same idea; for example, misclassifying a point of doubled weight incurs the same penalty as misclassifying an unweighted point with doubled misclassification cost. In this way, class
weighting can be thought of either as a simple over/undersampling technique or as a modification of misclassification costs.  

A different approach to dealing with imbalanced classes or misclassification costs is to choose an asymmetric impurity function to determine splits. Intuitively, the asymmetry in the impurity function should somehow naturally create a bias toward or against a particular class. Work by
Marcellin, Zighed, and Ritschard \cite{MZR06, MZR08} considered the case of imbalanced classes and proposed a family of asymmetric impurity functions. They showed a change in the shapes of the precision-recall and ROC curves for
several example datasets when using these asymmetric impurity functions in place of a symmetric function, giving an improvement in recall at lower-precision decision thresholds. The parametrized family $h_m: [0,1]\rightarrow\mathbb{R}$ they proposed is given by
\begin{equation}\label{eqn:MZRimp}
h_m(p) = \frac{p(1-p)}{(-2m+1)p+m^2},\qquad m\in(0,1)
\end{equation}
where the parameter $m$ is also the maximizer of $h_m$.

In this paper, we more closely investigate exactly how asymmetrizing an impurity function leads (at least locally) to favoring purity in one class over another when splitting a node. In particular, we relax the usual axioms of an impurity function (Definition \ref{def:preimp})
then compare two arbitrary impurity functions $f$ and $g$ and investigate what
causes $f$ to more strongly prefer purity in one class than $g$ does for a given split. We give a rigorous definition of this notion (Definitions \ref{def:equivalent}, \ref{def:positivelypure}), then state and prove a necessary and sufficient condition on $f$ and $g$
for such a comparison to hold (Theorem \ref{thm:mainthm}). We also show that class weighting is equivalent to applying a specific transformation to the impurity function (Definitions \ref{def:phi_w}, \ref{def:T_w}, Theorem \ref{thm:weightequiv}), and tie all of these preceding ideas together for
a class of impurity functions that includes the entropy and Gini impurity (Definition \ref{def:respectweight}, Theorem \ref{thm:sectionlink}). We also give a characterization of cost-insensitive impurity functions (Definition \ref{def:costinsensitive}, Theorem \ref{thm:insensitive}).
Along the way, we consider the typical axioms imposed upon an impurity function and remark on each axiom's utility and necessity.

This paper is organized as follows: in Section \ref{sec:conventions} we state some preliminary terminology, conventions, and notation. In Section \ref{sec:performance} we give motivation for our main definition and describe how certain performance
metrics relate to a single split of a node. In Section \ref{sec:mainsec} we give a modified definition of impurity function, then state and prove our main results about comparisons of impurity functions. In Section \ref{sec:weighting} 
we define a transformation on the set of impurity functions and show equivalence between this transformation and class weighting. We then relate this transformation back to Section \ref{sec:mainsec} and briefly discuss cost-insensitive impurity functions.
Finally, in Section \ref{sec:impaxioms} we close with a few remarks about the axioms of an impurity function as typically stated in the literature.

\section{Preliminary Terminology, Conventions, and Notation}\label{sec:conventions}

Throughout this paper, we only concern ourselves with binary classification; all underlying distributions of data are assumed to have two classes. We will refer to one of the
classes as {\it negatives} or {\it Class 0}, and to the other as {\it positives} or {\it Class 1}. We use the term {\it positive prevalence} of a tree node or dataset to refer to the weighted proportion of Class 1 points in said node or dataset. All trees are binary trees with each
non-leaf node having two nonempty children. Given a node that splits into two children,
we will refer to the child node with lower positive prevalence as the {\it left child}, and the child node with higher positive prevalence the {\it right child} (if both nodes have the same positive prevalence, label them as left and right arbitrarily). We will always use the letters
$c,a,b$ (sometimes subscripted) to denote the positive prevalences of the parent node, the left child, and the right child, respectively, and will refer to $a$ and $b$ as the {\it left} and {\it right positive prevalences}. We will use the letters $f$ and $g$ to denote impurity functions. Finally, for $a\leq c\leq b$ and a function $f$ we adopt the convention
$$
\left.\frac{b-c}{b-a}f(a) + \frac{c-a}{b-a}f(b)\right|_{(a,b)=(c,c)}=f(c).
$$

\section{Performance Metrics for a Single Split}\label{sec:performance}

In this section we provide some motivation and intuition for what follows in Sections \ref{sec:mainsec} and \ref{sec:weighting}. Let us begin with an example to illustrate the notion of ``preference for purity in a given class" for one impurity function versus another.

\begin{figure}
  \includegraphics[width=70 ex]{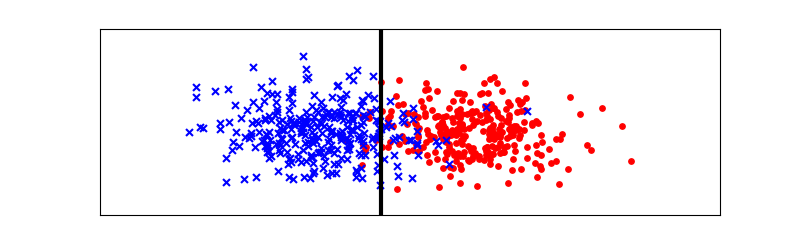}
  \includegraphics[width=70 ex]{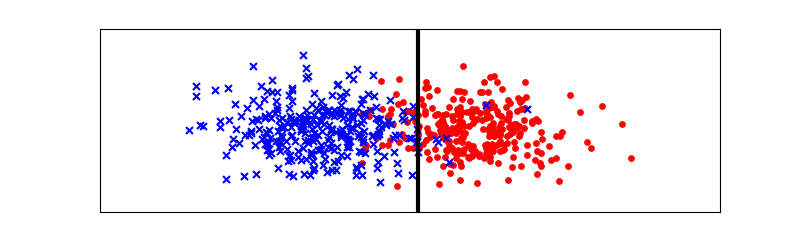}
  \includegraphics[width=70 ex]{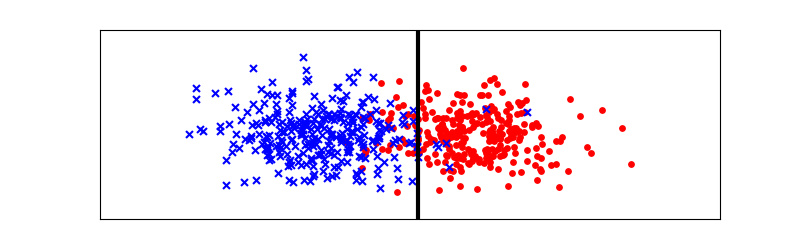}
  \includegraphics[width=70 ex]{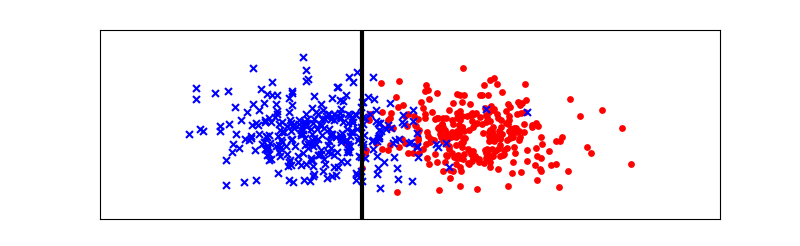}
  \caption{Top: A set of class 0 points (blue `x's) and Class 1 points (red circles) along with the optimal split with respect to the Gini impurity (bold black line). Second from top: Same set of points, but with optimal split with respect to the impurity function $f(p) = p-p^3$.
Third from top: Same points, but with the Class 1 points' weights halved. Weighted points are then split using Gini impurity. Bottom: Same points, but with the Class 1 points' weights scaled by a factor of 5. Weighted points are then split using Gini impurity.}
  \label{fig:splits}
\end{figure}

In Figure \ref{fig:splits}, the top plot shows a collection of Class 0 points (blue `x's) and Class 1 points (red circles)
in the plane, all of unit weight, along with the optimal single split of this set (bold black line) with respect to the Gini impurity $g(p) = 2p(1-p)$. In this plot we can see that the Gini impurity chooses a split that gives a left child that is quite pure (i.e., has a low positive prevalence) and a right child that is
also reasonably pure (i.e., has a high positive prevalence).

The second plot shows the same set of points, but now shows the optimal split with respect to the asymmetric impurity function $f(p)=p-p^3$.  In this plot we can see that this particular asymmetric impurity chooses a
split with a right child that is much more pure than the right child produced by the Gini impurity, but with the tradeoff of lower purity in the left child. Note also that the region corresponding to the right child is smaller. In this example,
the asymmetric impurity function $f$ preferred purity for Class 1 points more strongly than the Gini impurity did, whereas the Gini impurity preferred purity for Class 0 points more strongly than $f$ did. 

The third plot shows the same set of points, but now weighted so that all Class 1 points each have weight equal to 1/2 (which corresponds to undersampling Class 1 points). The split shown in this figure is the optimal split of this weighted set with respect to the Gini impurity.  
The Gini impurity on this weighted set shows similar behavior to the asymmetric impurity function, preferring purity for Class 1 points. Note that decreasing the weight of the Class 1 points increased the purity of the right child. This makes intuitive sense for the following reason: one can afford to
``pollute" the left child with Class 1 points without ruining the purity very much since the Class 1 points are light; on the other hand, polluting the right child with even a few Class 0 points can quickly ruin the purity since the Class 0 points are now relatively heavy.

The bottom plot shows the same set of points, but now weighted so that all Class 1 points each have weight equal to 5. The split shown in this figure is the optimal split of this weighted set with respect to the Gini impurity.  
The optimal split of this weighted set has the purest left child of all, with the least pure right child. Note also that the region corresponding to this right child is larger than in the other plots.

Now consider the following decision tree of depth 1 generated by a single split of a given dataset. Suppose our dataset has total weight equal to $W$ and has positive prevalence $c$. Suppose our single split yields children with positive prevalences $a < b$. Then the only nontrivial
classifier we can make from this tree is to classify the points in the left child as negatives and points in the right child as positives. Since the weights of the children are uniquely determined by their positive prevalences (see Proposition \ref{prop:childweights}), we therefore have
the following confusion matrix for this classifier:  
\makegapedcells
\begin{center}
\begin{tabular}{cc|cc}
\multicolumn{2}{c}{}
            &   \multicolumn{2}{c}{Predicted} \\
    &       &   Positive &   Negative              \\ 
    \cline{2-4}
\multirow{2}{*}{\rotatebox[origin=c]{90}{Actual}}
    & Positive      & $\frac{W}{b-a}(c-a) b$ & $\frac{W}{b-a}(b-c) a$                 \\
    & Negative    & $\frac{W}{b-a}(c-a)(1-b)$    & $\frac{W}{b-a}(b-c) (1-a)$                \\ 
    \cline{2-4}
\end{tabular}
\end{center}
\vspace{2ex}

Now we have the usual pairs of metrics to describe performance: true positive rate and false positive rate, and precision and recall. Another pair of metrics that describes classifier performance is {\it positive predictive value} (PPV)
and {\it negative predictive value} (NPV). The PPV is just a synonym for precision. The NPV is the analogue of precision for negative points; i.e., the NPV is the number of true negatives divided by the total number of predicted negatives. 
Now the true positive rate (recall) and false positive rate for the classifier above do not have a particularly nice form, but the PPV (precision) and NPV do: $PPV = b$, $NPV = 1-a$. In other words, a good split -- which tries to maximize $b$ and minimize $a$ --
tries to locally maximize PPV and NPV. In our example above, the asymmetric impurity function gave us a split with higher PPV than the split that the Gini impurity gave (on the unweighted set), with the
tradeoff of lower NPV. Weighting the Class 1 points instead by a factor 1/2 gave similar behavior. Equivalently, the Gini impurity on the unweighted set gave a split with higher NPV with the tradeoff of lower PPV. Weighting the Class 1 points by a factor of 5 gave an even higher NPV.

PPV and NPV are ``opposing" metrics in the sense that, loosely speaking, forcing an improvement in one metric typically leads to a worsening of the other metric, and vice versa. The same is true of precision and recall.
We will see in the next sections under what conditions an impurity function ``tries harder" to maximize PPV (precision) at the potential expense of NPV and recall, and vice versa.

\section{Comparison of Splitting Behavior for Different Impurity Functions}\label{sec:mainsec}

In much of the literature (e.g., the standard reference text \cite{BFOS84} by Breiman et al.) an impurity function is defined to be a function $f:[0,1]\rightarrow \mathbb{R}$ that satisfies three axioms:
\begin{enumerate} 
\item $f(p)$ is maximized only at $p=1/2$;
\item $f(p)$ is minimized only at the endpoints $p=0,1$;
\item $f$ is symmetric, i.e., $f(p)=f(1-p)$.
\end{enumerate}

It is also not uncommon to require (or implicitly assume) that $f$ satisfies other properties such as  concavity (often strict concavity), differentiability, and the condition that $f(0)=f(1)=0$. These variations in convention
are often minor, and most of the commonly used impurity functions in practice such as the entropy $f(p) = -p\log p - (1-p)\log(1-p)$ and the Gini impurity $f(p)=2p(1-p)$ satisfy all these properties anyway.

However, in this paper we relax most of the above properties. Let us now state the definition of impurity function that we will use throughout this paper.

\begin{definition}\label{def:preimp}
A {\it preimpurity function} is a function $f:[0,1]\rightarrow \mathbb{R}$ that satisfies the following two properties:
\begin{enumerate} 
\item $f$ is continuous on $[0,1]$ and $C^3$ on $(0,1)$;
\item $f''<0$ on $(0,1)$.
\end{enumerate}

If we also have $f(0)=f(1)=0$, then we call $f$ an {\it impurity function}.
\end{definition}

\begin{remark}
A couple remarks are worth making here: Firstly, the smoothness condition above, while stronger than what is typically imposed, will show to be a useful and convenient condition that facilitates the statements and proofs of the results throughout this section and the next.
We suspect that such smoothness is not actually necessary for our results to hold anyway (see Remark \ref{rem:nonsmoothness}). Concavity, on the other hand, is not only necessary to prove our results, but is also necessary in general to ensure
that an impurity function behaves well when splitting a node; we elaborate on this assertion in Section \ref{sec:impaxioms}. Again, most commonly used impurity functions, e.g. entropy and Gini impurity, satisfy these conditions as well.
(These conditions do exclude, for example, the misclassification rate $f(p)=\min(p,1-p)$ but that will not concern us.)

Secondly, despite the fact that we do not really care about the value of our impurity functions at the endpoints, we will see (Corollary \ref{cor:standardform}) that there is no loss of generality in fixing those values. We do want the flexibility of
allowing for arbitrary values at the endpoints, however, and will therefore be using preimpurity functions when discussing optimal splits.
\end{remark}

Recall the following basic facts about impurity of a node: The total impurity (with respect to a preimpurity function $f$) of a node $n$ with positive prevalence $c$ and total weight $W$ is $W\cdot f(c)$. If $n$ is split into two
children with positive prevalences $a$ and $b$ with $a\leq b$, then the combined total impurity of the children (which we will also refer to as the {\it impurity of the split}) is $W_l\cdot f(a) + W_r\cdot f(b)$, where $W_l,W_r$ are
the total weights of the points in the left child and right child, respectively. Now $W_l+W_r = W$. If $a=c=b$, then the children's combined total impurity simplifies to $W\cdot f(c)$ again. Otherwise, we have $a<c<b$. Now the total weight
of the Class 1 points in $n$ is $W c$. Then since we also have $Wc=W_l\, a + W_r\, b$ (since total weight of Class 1 points in $n$ is preserved) we can solve for $W_l,W_r$:
$$
W_l = W\cdot \frac{b-c}{b-a}, \qquad W_r = W\cdot \frac{c-a}{b-a},
$$
so that the total impurity of this split is
\begin{equation}\label{eqn:splitimp}
W\cdot\left(\frac{b-c}{b-a} f(a) + \frac{c-a}{b-a} f(b)\right).
\end{equation}
The optimal split with respect to $f$ is then the split whose left and right positive prevalences minimize (\ref{eqn:splitimp}).

We summarize the above observations as a proposition:

\begin{proposition}\label{prop:childweights}
Let $n$ be a node with positive prevalence $c$ and total weight $W$. If $n$ is split such that the left and right positive prevalences are equal to $a$ and $b$, respectively, then the weights $W_l,W_r$ of the left and right child are given by  
$$
W_l = W\cdot\frac{b-c}{b-a},\qquad W_r = W\cdot\frac{c-a}{b-a}
$$
and the total impurity of the split with respect to the preimpurity function $f$ is equal to
$$
W\cdot\left(\frac{b-c}{b-a} f(a) + \frac{c-a}{b-a} f(b)\right).
$$
\end{proposition}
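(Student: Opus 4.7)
The plan is to derive the two expressions by writing down two conservation equations and solving a $2\times 2$ linear system; the impurity identity then follows by plugging into the definition $W_l f(a)+W_r f(b)$.

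First I would dispense with the degenerate case $a=b$. Since $a\le c\le b$, the equality $a=b$ forces $a=c=b$; in that situation the convention stated at the end of Section~\ref{sec:conventions} tells us that both weight expressions and the impurity expression collapse to $W$ and $W\cdot f(c)$ respectively, matching the claim (and matching the fact that a ``split'' into one child of prevalence $c$ has the same total impurity as the parent).

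Next, for the nondegenerate case $a<c<b$, I would write the two conservation laws that any split must satisfy. Since the split partitions the points of $n$,
\begin{equation*}
W_l + W_r = W,
\end{equation*}
and since it also partitions the Class~1 weight (the total Class~1 weight in $n$ is $Wc$, and in the children it is $W_l a$ and $W_r b$),
\begin{equation*}
a W_l + b W_r = c W.
\end{equation*}
This is a $2\times 2$ linear system in $(W_l,W_r)$ with determinant $b-a\neq 0$. Solving (e.g.\ by substituting $W_r = W - W_l$ into the second equation) yields $W_l = W(b-c)/(b-a)$ and $W_r = W(c-a)/(b-a)$, which are the claimed expressions. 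Note the ordering $a\le c\le b$ guarantees both weights are nonnegative, as required.

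Finally, by the definition of the impurity of a split recalled just before the proposition, the combined impurity equals $W_l f(a) + W_r f(b)$; substituting the two weight formulas and factoring out $W$ gives exactly $W\cdot\bigl(\tfrac{b-c}{b-a}f(a)+\tfrac{c-a}{b-a}f(b)\bigr)$. There is no real obstacle here — the entire statement is a direct consequence of two conservation equations — so the only thing to be slightly careful about is the degenerate case $a=b=c$, which is handled by invoking the convention from Section~\ref{sec:conventions} rather than by dividing by $b-a$.
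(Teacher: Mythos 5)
Your proof is correct and follows essentially the same route as the paper: the paper also derives $W_l,W_r$ from the two conservation equations $W_l+W_r=W$ and $W_l\,a+W_r\,b=Wc$, handles the degenerate case $a=c=b$ separately via the stated convention, and then substitutes into $W_l f(a)+W_r f(b)$. No gaps.
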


We are now ready to start defining comparisons of preimpurity functions.

\begin{definition}\label{def:equivalent}
Let $f,g$ be preimpurity functions. We say $f$ {\it is equivalent to} $g$ if for every node $n$, 
and every set of possible splits of $n$, the optimal split (or splits) with respect to $f$ is the same as the optimal split with respect to $g$. In other words (see Remarks \ref{rem:onlytwo} and \ref{rem:allsplitspossible} below), $f$ is equivalent to $g$ if for 
all $c\in(0,1)$ and all finite subsets $S\subseteq([0,c)\times(c,1])\cup\{(c,c)\}$ we have
\begin{equation}\label{eq:equivalence}
\argmin_{(a,b)\in S}\left(\frac{b-c}{b-a}f(a) + \frac{c-a}{b-a}f(b)\right) = \argmin_{(a,b)\in S}\left(\frac{b-c}{b-a}g(a) + \frac{c-a}{b-a}g(b)\right).
\end{equation}
\end{definition}

\begin{remark}\label{rem:onlytwo}
In Definition \ref{def:equivalent} above, it suffices to only consider sets $S$ with two elements since the argmin of a function on a finite set can be determined by pairwise comparing
the values of the function over all possible pairs of inputs. It is also clear, though perhaps worth re-emphasizing, that Definition \ref{def:equivalent} does not use the minimum {\it values} of the expressions in (\ref{eq:equivalence}); only the {\it minimizers} matter since those are what
determine the splitting decision for a node. Hence we omit the total weight $W$ of $n$ in (\ref{eq:equivalence}). 
\end{remark}

\begin{remark}\label{rem:allsplitspossible}
Observe that every pair of possible splits of a node with positive prevalence $c$ yields two (possibly nondistinct) elements $(a_1,b_1), (a_2,b_2) \in ([0,c)\times(c,1])\cup\{(c,c)\}$. Conversely, every pair of (possibly nondistinct) elements
$(a_1,b_1),(a_2,b_2)\in([0,c)\times(c,1])\cup\{(c,c)\}$ is realizable as left and right positive prevalences of two splits of some dataset with positive prevalence $c$ (see Proposition \ref{prop:possiblesplits} below). Hence Equation (\ref{eq:equivalence})
above does indeed characterize splitting equivalence of preimpurity functions.
\end{remark}

\begin{proposition}\label{prop:possiblesplits}
Let $c\in(0,1)$, and let $(a_1,b_1),(a_2,b_2)\in([0,c)\times(c,1])\cup\{(c,c)\}$. Then there exists a dataset $D$ with positive prevalence $c$
such that: there exists two splits of $D$, one of which has left and right positive prevalences $a_1$ and $b_1$, and the other of which has left and right positive prevalences
$a_2$ and $b_2$.
\end{proposition}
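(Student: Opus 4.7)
The plan is to build $D$ as a small configuration of weighted points arranged on a $2\times 2$ grid whose rows index the sides of split~1 and whose columns index the sides of split~2, so that both prescribed splits are realized simultaneously. Fix any total weight $W>0$. By Proposition \ref{prop:childweights}, the required left/right child weights for split $i$ are $W_l^{(i)}=W(b_i-c)/(b_i-a_i)$ and $W_r^{(i)}=W(c-a_i)/(b_i-a_i)$; in the degenerate case $(a_i,b_i)=(c,c)$ I would simply pick any $W_l^{(i)}\in(0,W)$ and $W_r^{(i)}=W-W_l^{(i)}$.

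For each $(X,Y)\in\{L,R\}^2$, I would place in $D$ two points: one of Class~0 with weight $n_{XY}$ and one of Class~1 with weight $m_{XY}$, giving a total of eight points. Split~1 is defined as the partition sending all points with $X=L$ to the left child and $X=R$ to the right child, and split~2 does the same with $Y$; any such partition can be realized as a genuine split of a dataset by assigning each point a distinct feature coordinate and using an appropriate threshold. Writing out ``child~$\star$ of split~$i$ has the prescribed total and positive weight'' for each of the four children then yields eight linear equations that decouple cleanly into two independent $2\times 2$ transportation problems: the $m_{XY}$'s must have row sums $(W_l^{(1)}a_1,\,W_r^{(1)}b_1)$ and column sums $(W_l^{(2)}a_2,\,W_r^{(2)}b_2)$, and the $n_{XY}$'s must have row sums $(W_l^{(1)}(1-a_1),\,W_r^{(1)}(1-b_1))$ and column sums $(W_l^{(2)}(1-a_2),\,W_r^{(2)}(1-b_2))$.

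The critical observation is that both systems are consistent: by Proposition \ref{prop:childweights}, $W_l^{(i)}a_i+W_r^{(i)}b_i=Wc$ for each $i$, so for the $m$'s both row and column sums add up to $Wc$, and for the $n$'s both add to $W(1-c)$. Hence each $2\times 2$ transportation problem admits a nonnegative solution; an explicit one is the product/independent assignment $m_{XY}=W_X^{(1)}\alpha_X^{(1)}\,W_Y^{(2)}\alpha_Y^{(2)}/(Wc)$ with $\alpha_L^{(i)}:=a_i$ and $\alpha_R^{(i)}:=b_i$, and analogously $n_{XY}=W_X^{(1)}(1-\alpha_X^{(1)})\,W_Y^{(2)}(1-\alpha_Y^{(2)})/(W(1-c))$. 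These weights make the eight-point dataset $D$ have positive prevalence $c$ and make the two partitions described above realize the required $(a_1,b_1)$ and $(a_2,b_2)$.

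The main obstacle is essentially a conceptual one, namely spotting that the Class~0 and Class~1 contributions decouple into two independent $2\times 2$ transportation problems; once that decomposition is noticed, nonnegative feasibility is immediate from the matching of marginal totals (or directly from the explicit product formula). The only routine wrinkles are the degenerate prevalence cases, which are absorbed uniformly into the same construction via the arbitrary choice of $W_l^{(i)}$ when $(a_i,b_i)=(c,c)$, and the verification that distinct feature coordinates can be engineered so that the prescribed partitions really are achievable as splits.
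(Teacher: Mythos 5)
Your construction is essentially the paper's own proof: the paper also places one Class~1 and one Class~0 weighted point in each of the four quadrants (rows $\leftrightarrow$ split~1, columns $\leftrightarrow$ split~2), and its explicit weights $R_i,B_i$ are exactly your product/independent-coupling solution $m_{XY},n_{XY}$ up to an overall positive rescaling. The only cosmetic difference is that you justify feasibility via the two decoupled $2\times 2$ transportation problems and absorb the degenerate $(c,c)$ cases uniformly, whereas the paper simply writes down the weights and treats the degenerate cases by separate explicit formulas; both arguments are correct.
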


\begin{proof}
Take $\mathbb{R}^2$ as a feature space. If $a_1<c<b_1$ and $a_2<c<b_2$, let
\begin{align*}
&R_1 = b_1a_2(c-a_1)(b_2-c)(1-c), &B_1 = (1-b_1)(1-a_2)(c-a_1)(b_2-c)c,\\
&R_2 = a_1a_2(b_1-c)(b_2-c)(1-c), &B_2 = (1-a_1)(1-a_2)(b_1-c)(b_2-c)c,\\
&R_3 = a_1b_2(b_1-c)(c-a_2)(1-c), &B_3 = (1-a_1)(1-b_2)(b_1-c)(c-a_2)c,\\ 
&R_4 = b_1b_2(c-a_1)(c-a_2)(1-c), &B_4 = (1-b_1)(1-b_2)(c-a_1)(c-a_2)c;
\end{align*} 
if $a_1<c<b_1$ and $a_2=c=b_2$, let
\begin{align*}
&R_1 = R_4 = b_1(c-a_1), &B_1 = B_4 = (1-b_1)(c-a_1),\\ 
&R_2 = R_3 = a_1(b_1-c), &B_2 = B_3 = (1-a_1)(b_1-c);
\end{align*} 
and if $a_1=a_2=c=b_1=b_2$, let
\begin{align*}
&R_1 = R_2 = R_3 = R_4 = c, &B_1 = B_2 = B_3 = B_4 = 1-c. 
\end{align*} 

For $i=1,2,3,4$, place a point of Class 1 with weight $R_i$ and a point of Class 0 with weight $B_i$ in the $i$th quadrant. Take $D$ to be the set of these points. A direct computation then shows that $D$ has positive prevalence $c$,
that the left and right half-planes have positive prevalences $a_1$ and $b_1$, respectively, and that the upper and lower half-planes have positive prevalences $a_2$ and $b_2$, respectively.  
\end{proof}

\begin{lemma}\label{lem:partialequivalence}
For every preimpurity function $f$ and every $A,B,C\in\mathbb{R}$ with $A> 0$ we have that $f$ is equivalent to the preimpurity function $\tilde{f}(p) = A\,f(p) + B p + C$. 
\end{lemma}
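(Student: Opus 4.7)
The plan is to directly compute how the splitting functional $\Phi_h(a,b;c) := \frac{b-c}{b-a}h(a) + \frac{c-a}{b-a}h(b)$ transforms under the substitution $h = \tilde{f}$, and to observe that it differs from $A\,\Phi_f(a,b;c)$ only by an additive quantity depending on $c$ alone. Then the definition of equivalence follows immediately since $A > 0$.

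First I would check that $\tilde{f}$ is actually a preimpurity function, which requires only that continuity and $C^3$-ness are preserved under adding a smooth affine term (clear) and that $\tilde{f}''(p) = A f''(p) < 0$ on $(0,1)$, which holds because $A > 0$ and $f'' < 0$. The crucial step is then the computation of the linear contribution: writing $\lambda = \frac{b-c}{b-a}$ and $1-\lambda = \frac{c-a}{b-a}$, one has the identity $\lambda a + (1-\lambda) b = c$ (the whole point of a ``split'' is that it preserves the weighted mean), so the terms $Bp + C$ in $\tilde{f}$ contribute exactly
$$
\lambda (Ba + C) + (1-\lambda)(Bb + C) = Bc + C
$$
to $\Phi_{\tilde f}(a,b;c)$. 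Hence
$$
\Phi_{\tilde f}(a,b;c) = A\,\Phi_f(a,b;c) + Bc + C.
$$
The degenerate case $(a,b) = (c,c)$ reduces under the convention of Section \ref{sec:conventions} to $\tilde{f}(c) = Af(c) + Bc + C$, which is consistent with the same identity.

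Because the additive term $Bc + C$ depends only on $c$ (which is fixed across the minimization in (\ref{eq:equivalence})) and the multiplicative constant $A$ is strictly positive, for every $c \in (0,1)$ and every finite $S \subseteq ([0,c)\times(c,1])\cup\{(c,c)\}$ the two functionals $\Phi_f(\cdot,\cdot;c)$ and $\Phi_{\tilde f}(\cdot,\cdot;c)$ share the same set of minimizers on $S$. This establishes (\ref{eq:equivalence}) and hence the equivalence of $f$ and $\tilde{f}$. There is no real obstacle in the argument; the single point that might deserve care is the edge case $(c,c)$, where one needs to recall the convention that makes the splitting functional continuous there, and the verification of strict concavity of $\tilde{f}$, which is where the hypothesis $A > 0$ is indispensable.
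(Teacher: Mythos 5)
Your proposal is correct and follows essentially the same route as the paper: a direct computation showing that the split functional for $\tilde{f}$ equals $A$ times the split functional for $f$ plus the constant $Bc+C$, so the minimizers coincide since $A>0$. Your additional checks (that $\tilde{f}$ is a preimpurity function and that the $(c,c)$ convention is consistent) are fine but not needed beyond what the paper records.
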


\begin{proof}
A direct computation shows that for every fixed $c\in(0,1)$ and every finite subset $S\subseteq([0,c)\times(c,1])\cup\{(c,c)\}$ we have
\begin{align*}
\argmin_{(a,b)\in S}\left(\frac{b-c}{b-a}\tilde{f}(a) + \frac{c-a}{b-a}\tilde{f}(b)\right) &= \argmin_{(a,b)\in S}\left(A\left(\frac{b-c}{b-a}f(a) + \frac{c-a}{b-a}f(b)\right) + B c + C\right)\\
&= \argmin_{(a,b)\in S}\left(\frac{b-c}{b-a}f(a) + \frac{c-a}{b-a}f(b)\right).
\end{align*} 
\end{proof}

\begin{definition}\label{def:positivelypure}
Let $f,g$ be preimpurity functions. We say $f$ {\it splits more positively purely} (or {\it more purely with respect to Class 1}) {\it than} $g$ if for every node $n$, 
and every set of possible splits of $n$, there exists an optimal split with respect to $f$ that produces a right child whose positive prevalence is greater than or equal to the
positive prevalence of every node produced by every optimal split of $n$ with respect to $g$. In other words,  $f$ splits more positively purely than $g$ if for 
all $c\in(0,1)$ and all finite subsets $S\subseteq[0,c)\times(c,1]$ we have
\begin{equation}\label{eq:positivelypure}
\max \left\{\argmin_{b : (a,b)\in S}\left(\frac{b-c}{b-a}f(a) + \frac{c-a}{b-a}f(b)\right)\right\} \geq \max \left\{\argmin_{b : (a,b)\in S}\left(\frac{b-c}{b-a}g(a) + \frac{c-a}{b-a}g(b)\right)\right\}.
\end{equation}\label{def:negativelypure}
Similarly, we say $g$ {\it splits more negatively purely} (or {\it more purely with respect to Class 0}) {\it than} $f$  if for every node $n$, 
and every set of possible splits of $n$, there exists an optimal split with respect to $g$ that produces a left child whose positive prevalence is less than or equal to the
positive prevalence of every node produced by every optimal split of $n$ with respect to $f$; i.e., $g$ splits more negatively purely than $f$ if for 
all $c\in(0,1)$ and all finite subsets $S\subseteq[0,c)\times(c,1]$ we have
\begin{equation}\label{eq:negativelypure}
\min \left\{\argmin_{a : (a,b)\in S}\left(\frac{b-c}{b-a}g(a) + \frac{c-a}{b-a}g(b)\right)\right\} \leq \min \left\{\argmin_{a : (a,b)\in S}\left(\frac{b-c}{b-a}f(a) + \frac{c-a}{b-a}f(b)\right)\right\}.
\end{equation}
\end{definition}

\begin{remark}\label{rem:pospure}
In (\ref{eq:positivelypure}), it again suffices to only consider sets $S$ with two elements since any finite $S$ can be reduced to the subset that contains the two elements that attain the left and right-hand sides
of (\ref{eq:positivelypure}). Furthermore, concavity of $f$ and $g$ imply that the pair $(c,c)$
is a maximizer of the expressions in (\ref{eq:positivelypure}). Since for every other pair $(a,b)$ we have $b>c$, the only way either side of
the inequality (\ref{eq:positivelypure}) can equal $c$ is if $S=\{(c,c)\}$, in which case (\ref{eq:positivelypure}) becomes trivial. (A similar argument holds for (\ref{eq:negativelypure}).) Hence, for convenience, we may exclude the pair $(c,c)$ from $S$ in Definition
\ref{def:positivelypure}.
\end{remark}

In light of our discussion in Section \ref{sec:performance}, Definition \ref{def:positivelypure} intuitively says that if $f$ splits more positively purely than $g$ then for any given node the optimal split with respect to $f$ has a higher PPV than the optimal split with respect to $g$.
This definition also assumes the convention that in case of ties, each of $f$ and $g$ chooses its optimal split with the highest right-child positive prevalence, hence the usage of $\max$ in (\ref{eq:positivelypure}). Analogous remarks hold when $g$ splits more negatively purely
than $f$. 

At this point, let us give a few examples to illustrate Definition \ref{def:positivelypure}. Let $f(p)=p-p^3$, $g(p)=2p(1-p)$, as we did with our example in Section \ref{sec:performance}. Then $f$ splits more positively purely than $g$, and $g$ splits more negatively
purely than $f$ (a fact that will become clear when we reach Theorem \ref{thm:mainthm}). Suppose we have a node of total weight equal to 1 and positive prevalence equal to 40\%, and suppose we have a choice of two possible splits: Split 1, which splits the
node into a left child with weight 0.4 and positive prevalence 10\%, and a right child with weight 0.6 and positive prevalence 60\%; and Split 2, which splits the node into a left child with weight 0.7 and positive prevalence 25\%, and a right child with weight 0.3 and
positive prevalence 75\%. We evaluate the impurities of Splits 1 and 2 with respect to $f$:
$$
\mbox{Split 1:}\quad 0.4\cdot f(0.10) + 0.6\cdot f(0.60) = 0.27, \qquad \quad \mbox{Split 2:}\quad 0.7\cdot f(0.25) + 0.3\cdot f(0.75) = 0.2625,
$$   
so Split 2 is the optimal split with respect to $f$. Now we evaluate the impurities of Splits 1 and 2 with respect to $g$:
$$
\mbox{Split 1:}\quad 0.4\cdot g(0.10) + 0.6\cdot g(0.60) = 0.36, \qquad \quad \mbox{Split 2:}\quad 0.7\cdot g(0.25) + 0.3\cdot g(0.75) = 0.375,
$$   
so Split 1 is optimal with respect to $g$. In this example we see $f$ preferred the split that had the highly pure right child while $g$ preferred the split with the highly pure left child.

A second example, one that illustrates Definition \ref{def:positivelypure} graphically, is given in Figure \ref{fig:pospuregraph}. Now for every impurity function $f$, every node $n$ of positive prevalence $c$ (and unit total weight), and every split of $n$ with left and right
positive prevalences equal to $a$ and $b$, the impurity of that split is equal to the $y$-value of the line segment between the points $(a,f(a))$ and $(b,f(b))$ at the point where $p=c$. In this example, let $f(p)=p-p^3$, $g(p)=2p(1-p)$ as before. Suppose we have a node of total weight
equal to 1 and a positive prevalence of  45\%. Suppose we have a choice of two splits: one split with left and right positive prevalences of 0\%  and 70\%; and the other split with left and right positive prevalences of 25\%  and 95\%. The top plot shows the graph of $f$
along with the line segments corresponding to our two splits. We can graphically see that the line segment for Split 2 lies below the line segment for Split 1 when $p=0.45$. So Split 2 has lower impurity, and is therefore optimal with respect to $f$.  The bottom plot shows the graph of $g$
along with the line segments corresponding to the same two splits. In this plot, we can see that the line segment for Split 1 lies below the line segment for Split 2 when $p=0.45$. So Split 1 has lower impurity, and is therefore optimal with respect to $g$. As with our previous example, we see $f$
preferred the split that had the highly pure right child while $g$ preferred the split with the highly pure left child.

\begin{figure}
  \includegraphics[width=70 ex]{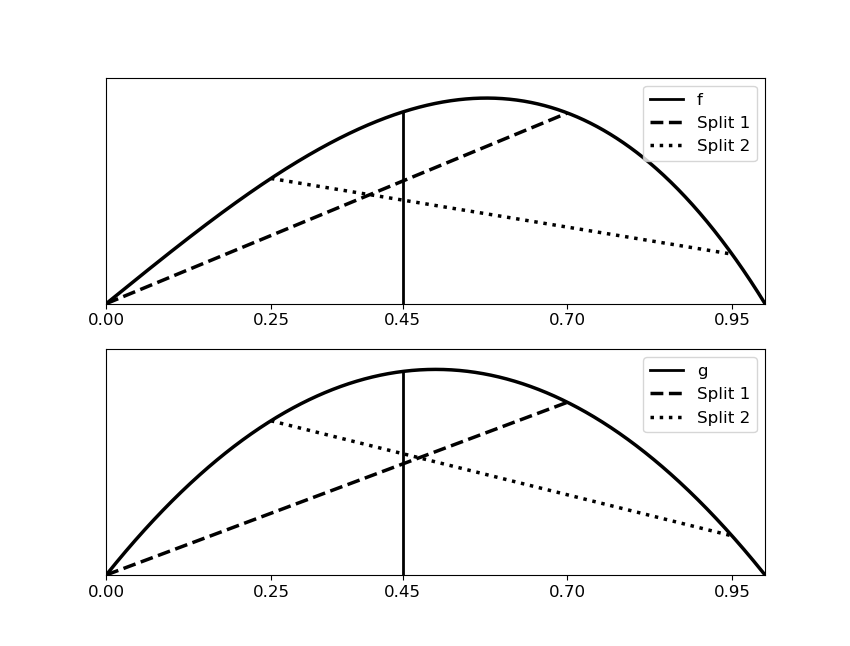}
  \caption{Top: Graph of impurity function $f(p)=p-p^3$, along with two possible splits. Bottom: Graph of impurity function $g(p)=2p(1-p)$, along with the same two splits.}
  \label{fig:pospuregraph}
\end{figure}

A third example, one that illustrates Definition \ref{def:positivelypure} on a dataset of points, is shown in the top two plots in Figure \ref{fig:splits} in Section \ref{sec:performance}. Here, the set of possible splits is all splits whose boundary is a vertical line.

Of course, for yet other examples, $f$ and $g$ might possibly choose the same split.

\begin{lemma}\label{lem:keylemma}
Let $f,g$ be preimpurity functions. Let $0\leq a_1<a_2<b_1<b_2\leq 1$ and suppose $f(a_1)=g(a_1)=f(b_1)=g(b_1)=0$. Suppose also that $f''/g''$ is increasing on $(a_1,b_2)$. Then
$$
\frac{f(a_2)}{g(a_2)} \leq \frac{f(b_2)}{g(b_2)} . 
$$ 
Furthermore, if $f''/g''$ is strictly increasing then the above conclusion is a strict inequality.
\end{lemma}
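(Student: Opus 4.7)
The plan is to prove the weak inequality $\lambda_2 := f(a_2)/g(a_2) \leq f(b_2)/g(b_2) =: \lambda_3$ by contradiction, using the one-parameter family $h_\lambda(p) := f(p) - \lambda g(p)$, and then upgrade to strict when $f''/g''$ is strictly increasing. First I would record three setup observations: strict concavity of $f, g$ together with vanishing at $a_1, b_1$ forces both $f$ and $g$ strictly positive on $(a_1, b_1)$ and strictly negative on $(b_1, 1]$, so $\lambda_2, \lambda_3$ are both positive; and two applications of Rolle to $h_{\lambda_2}$ (which has three zeros $a_1, a_2, b_1$) and to $h_{\lambda_3}$ (with zeros $a_1, b_1, b_2$) yield points $\mu_1 \in (a_1, b_1)$ and $\mu_2 \in (a_1, b_2)$ at which $f''/g''$ equals $\lambda_2$ and $\lambda_3$ respectively. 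Hence $\lambda_2, \lambda_3$ both lie in the image of the continuous monotone ratio $f''/g''$ on $(a_1, b_2)$.

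Assuming $\lambda_2 > \lambda_3$ for contradiction, I would pick $\lambda \in (\lambda_3, \lambda_2)$ and verify the sign pattern of $h_\lambda$: it vanishes at $a_1$ and $b_1$, and is strictly positive at $a_2$ (since $h_\lambda(a_2) = g(a_2)(\lambda_2 - \lambda) > 0$) and at $b_2$ (since $h_\lambda(b_2) = g(b_2)(\lambda_3 - \lambda) > 0$ is a product of two negatives). Writing $h_\lambda''(p) = g''(p)[f''(p)/g''(p) - \lambda]$, the IVT applied to $f''/g''$ furnishes $\tilde\mu \in (a_1, b_2)$ with $f''(\tilde\mu)/g''(\tilde\mu) = \lambda$. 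Since $g'' < 0$ and $f''/g''$ is increasing, $h_\lambda$ is convex on $[a_1, \tilde\mu]$ and concave on $[\tilde\mu, b_2]$, with $\tilde\mu$ an inflection point.

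The heart of the argument, and the main obstacle, is a three-case analysis on where $\tilde\mu$ sits relative to $a_2$ and $b_1$. If $\tilde\mu \leq a_2$, then $h_\lambda$ is concave on an interval containing $a_2 < b_1 < b_2$, and concavity at $b_1$ yields $h_\lambda(b_1) \geq \tfrac{b_2 - b_1}{b_2 - a_2} h_\lambda(a_2) + \tfrac{b_1 - a_2}{b_2 - a_2} h_\lambda(b_2) > 0$, contradicting $h_\lambda(b_1) = 0$. If $\tilde\mu > b_1$, then $h_\lambda$ is convex on $[a_1, b_1]$ with zero endpoints, so convexity at $a_2$ forces $h_\lambda(a_2) \leq 0$, contradicting $h_\lambda(a_2) > 0$. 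The remaining middle case $\tilde\mu \in (a_2, b_1]$ takes more care: convexity on $[a_1, \tilde\mu]$ with $h_\lambda(a_1) = 0$ and $h_\lambda(a_2) > 0$ forces $h_\lambda(\tilde\mu) \geq \tfrac{\tilde\mu - a_1}{a_2 - a_1} h_\lambda(a_2) > 0$; if $\tilde\mu = b_1$ this already contradicts $h_\lambda(b_1) = 0$, while if $\tilde\mu < b_1$ then the concavity argument from the first case, applied on $[\tilde\mu, b_2]$ with $\tilde\mu$ in place of $a_2$, again forces $h_\lambda(b_1) > 0$.

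For the strict inequality under strict monotonicity of $f''/g''$, I would separately exclude equality: if $\lambda_2 = \lambda_3 =: \lambda^*$, then $h_{\lambda^*}$ has four distinct zeros $a_1 < a_2 < b_1 < b_2$; two rounds of Rolle produce at least two distinct zeros of $h_{\lambda^*}''$ in $(a_1, b_2)$, but each such zero is a preimage of $\lambda^*$ under $f''/g''$, which strict monotonicity permits at most once. This contradiction, combined with the weak inequality, gives $\lambda_2 < \lambda_3$.
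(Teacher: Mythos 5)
Your proof is correct, and it takes a genuinely different route from the paper's. The paper normalizes so that $f'(b_1)=g'(b_1)$ (legitimate since the claim is scale-invariant and both slopes are negative), sets $k=g-f$, and uses Rolle together with the single sign change of $k''=(1-f''/g'')\,g''$ to show $k\ge 0$ on $[a_1,b_2]$; since $g(a_2)>0>g(b_2)$ this sandwiches the two ratios around $1$, i.e. $f(a_2)/g(a_2)\le 1\le f(b_2)/g(b_2)$. You instead work with the pencil $h_\lambda=f-\lambda g$: Rolle applied to $h_{\lambda_2}$ and $h_{\lambda_3}$ shows both ratios are attained as values of $f''/g''$ on $(a_1,b_2)$, so an intermediate $\lambda$ is attained as well, and the identity $h_\lambda''=g''\bigl(f''/g''-\lambda\bigr)$ plus monotonicity makes $h_\lambda$ convex-then-concave with a single inflection $\tilde\mu$; your three-case analysis then contradicts the sign pattern $h_\lambda(a_1)=h_\lambda(b_1)=0$, $h_\lambda(a_2)>0$, $h_\lambda(b_2)>0$. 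Both arguments hinge on the same structural fact (monotonicity of $f''/g''$ forces $f-\lambda g$ to have at most one inflection), but the paper's normalization yields the stronger pointwise conclusion $g\ge f$ after rescaling and a shorter write-up, while your pencil/zero-counting argument avoids the normalization trick, directly compares the two ratios, and gives a fully explicit treatment of the strict case (four zeros of $h_{\lambda^*}$, two rounds of Rolle, at most one preimage of $\lambda^*$ under a strictly monotone ratio), whereas the paper only sketches that part. In a final write-up you should make explicit the routine points you are implicitly using: convexity (resp.\ concavity) of $h_\lambda$ on the closed subintervals, including a possible endpoint $a_1=0$ or $b_2=1$ where $f,g$ are merely continuous, follows from the sign of $h_\lambda''$ on the interior plus continuity; and $f''/g''$ is continuous on $(a_1,b_2)\subseteq(0,1)$ because $g''$ is nonvanishing there, which is what legitimizes the intermediate value step.
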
 

\begin{proof}
Observe that the hypotheses and conclusion are invariant under scaling of $f$ and $g$ by positive constants, and observe that strict concavity implies that $f'(b_1)$ and $g'(b_1)$ are both negative. So we may also
suppose without loss of generality that $f'(b_1)=g'(b_1)$. Let $h = f''/g''$ so that $f'' = hg''$, and let $k=g-f$. Then $k(a_1)=k(b_1)=k'(b_1)=0$, and $k''=(1-h)g''$. 

Claim: $k\geq0$ on $[a_1,b_2]$.

Proof of claim: By Rolle's Theorem applied to $k$, there exists a $c\in(a_1,b_1)$ such that $k'(c)=0$. Now by Rolle's Theorem applied to $k'$, there exists a $d\in(c,b_1)$ such that $k''(d)=0$. Since $h$ is increasing and $g''<0$, we
therefore have that $k''\leq 0$ on $(a_1,d)$ and $k''\geq 0$ on $(d,b_2)$. So $k'$ is decreasing on $(a_1,d)$ and increasing on $(d,b_2)$. Since $k'(c)=0$, we have that $k'\geq 0$ on $(a_1,c)$ and $k'\leq 0$ on $(c,d)$; and
since $k'(b_1)=0$, we have that $k'\leq 0$ on $(d,b_1)$ and $k'\geq 0$ on $(b_1,b_2)$. This implies that $k$ is increasing on $[a_1,c]$, decreasing on $[c,b_1]$, and increasing on $[b_1,b_2]$. Finally, since 
$k(a_1)=k(b_1)=0$, we therefore conclude that $k\geq 0$ on $[a_1,b_2]$, proving the claim.

Now the above claim shows that both $k(a_2), k(b_2)\geq 0$, so that $g(a_2)\geq f(a_2)$ and $g(b_2)\geq f(b_2)$. Strict concavity of $g$ implies that $g(a_2)>0$ and $g(b_2) <0$, so that
$$
\frac{f(a_2)}{g(a_2)} \leq 1 \quad \mbox{and}\quad \frac{f(b_2)}{g(b_2)} \geq 1,
$$
and the desired result follows.

A straightforward modification of the above proof gives that our desired inequality is strict if $f''/g''$ is strictly increasing; details are omitted.
\end{proof}

We now present the main theorem of this section.

\begin{theorem}\label{thm:mainthm}
Let $f,g$ be preimpurity functions. Then $f$ splits more positively purely than $g$ if and only if $f''/g''$ is increasing on $(0,1)$.
\end{theorem}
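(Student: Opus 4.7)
For ($\Leftarrow$): Write $\Delta_h(a,b)=\frac{b-c}{b-a}h(a)+\frac{c-a}{b-a}h(b)$ for the split impurity at $c$ (suppressing $c$ from the notation). By Remark \ref{rem:pospure} it suffices to show, for any $c\in(0,1)$ and any two splits $(a_1,b_1), (a_2,b_2)$ with $a_i<c<b_i$ and (WLOG) $b_1<b_2$, that $\Delta_g(a_1,b_1)\geq \Delta_g(a_2,b_2)$ implies $\Delta_f(a_1,b_1)\geq \Delta_f(a_2,b_2)$. If $a_2<a_1$ then split 2 is strictly wider than split 1, and strict concavity forces the chord through $(a_1,h(a_1)),(b_1,h(b_1))$ to lie strictly above the chord through $(a_2,h(a_2)),(b_2,h(b_2))$ on $(a_1,b_1)$, so both $f$ and $g$ strictly prefer split 2. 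Otherwise $a_1\leq a_2<c<b_1<b_2$. Using Lemma \ref{lem:partialequivalence} I subtract from each of $f,g$ the affine function that makes it vanish at $a_1$ and $b_1$; this preserves both splitting behavior and $f''/g''$, and now gives $\Delta_f(a_1,b_1)=\Delta_g(a_1,b_1)=0$. Strict concavity with this normalization forces $f(a_2),g(a_2)>0$ (since $a_2\in[a_1,b_1]$) and $f(b_2),g(b_2)<0$ (since $b_2>b_1$). Applying Lemma \ref{lem:keylemma} yields $f(a_2)/g(a_2)\leq f(b_2)/g(b_2)$, which after cross-multiplying (being careful with signs) gives $f(a_2)/(-f(b_2))\leq g(a_2)/(-g(b_2))$; meanwhile the hypothesis $\Delta_g(a_2,b_2)\leq 0$ rearranges to $g(a_2)/(-g(b_2))\leq (c-a_2)/(b_2-c)$, and chaining the two inequalities yields $\Delta_f(a_2,b_2)\leq 0$ as desired.

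For ($\Rightarrow$): I argue by contrapositive. Suppose $(f''/g'')'(a_0)<0$ for some $a_0\in(0,1)$; since $f,g\in C^3$, this derivative remains negative on a neighborhood $U$ of $a_0$. Fix any $a\in U$ and small $L>0$ with $b:=a+L\in U$ and any $c\in(a,b)$. A direct computation of the partial derivatives of $\Delta_h$ yields
\[
\partial_a\Delta_h=\frac{b-c}{L^2}A_h,\qquad \partial_b\Delta_h=\frac{c-a}{L^2}B_h,
\]
where $A_h:=Lh'(a)-(h(b)-h(a))>0$ and $B_h:=Lh'(b)-(h(b)-h(a))<0$ by strict concavity. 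Taylor-expanding $A_h,B_h$ in $L$ shows that the $O(L^4)$ contributions to $A_gB_f$ and $A_fB_g$ are both equal to $-L^4 f''(a)g''(a)/4$ and therefore cancel, and the next-order term works out to
\[
A_gB_f-A_fB_g=\frac{L^5}{12}\bigl(f''(a)g'''(a)-f'''(a)g''(a)\bigr)+O(L^6)=-\frac{L^5}{12}\bigl(g''(a)\bigr)^2\Bigl(\frac{f''}{g''}\Bigr)'(a)+O(L^6),
\]
which is strictly positive for $L$ sufficiently small under our assumption. Positivity of $A_gB_f-A_fB_g$ (equivalently, of $\partial_a\Delta_g\,\partial_b\Delta_f-\partial_a\Delta_f\,\partial_b\Delta_g$) guarantees a direction $(\alpha,\beta)\in\mathbb{R}^2$ with $\beta>0$ along which the first-order change of $\Delta_g$ is strictly negative while that of $\Delta_f$ is strictly positive. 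For $\epsilon>0$ small enough to control the higher-order Taylor remainders, the two-element set $S=\{(a,b),(a+\epsilon\alpha,b+\epsilon\beta)\}$ is a valid pair of splits on which the unique $g$-optimal split has $b$-coordinate $b+\epsilon\beta$ while the unique $f$-optimal has $b$-coordinate $b<b+\epsilon\beta$, contradicting that $f$ splits more positively purely than $g$.

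The main obstacle is the delicate cancellation in the forward direction: the $O(L^4)$ terms of $A_gB_f$ and $A_fB_g$ are identical, so the whole content of the inequality sits in the next-order term, and one has to recognize $f''g'''-f'''g''$ as $-(g'')^2(f''/g'')'$ to tie the sign of the derivative of $f''/g''$ to the construction. Turning this infinitesimal sign obstruction into an honest finite-scale counterexample (choosing $\epsilon$ small enough that the first-order terms dominate) is a secondary but mechanical step. The backward direction is comparatively routine once the normalization via Lemma \ref{lem:partialequivalence} is in place and the signs of the four values $f(a_2),g(a_2),f(b_2),g(b_2)$ have been pinned down so that Lemma \ref{lem:keylemma} can be applied directly.
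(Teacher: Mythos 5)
Your argument is correct overall, and it splits naturally into two halves of different character. The sufficiency direction ($\Leftarrow$) is essentially the paper's own proof: the same normalization via Lemma \ref{lem:partialequivalence} so that $f,g$ vanish at $a_1,b_1$, the same sign bookkeeping, and the same invocation of Lemma \ref{lem:keylemma} followed by the same rearrangement into a comparison of critical values of $c$. One small slip there: you put the boundary case $a_1=a_2$ into the branch ``$a_1\leq a_2<c<b_1<b_2$,'' where your claim $f(a_2),g(a_2)>0$ fails (they vanish) and Lemma \ref{lem:keylemma} does not apply since it requires $a_1<a_2$ strictly; but that case is immediate, because then $\frac{b_2-c}{b_2-a_2}f(a_2)+\frac{c-a_2}{b_2-a_2}f(b_2)=\frac{c-a_2}{b_2-a_2}f(b_2)<0$ regardless of the hypothesis on $g$ (the paper folds $a_1=a_2$ into the trivial branch for exactly this reason). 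The necessity direction ($\Rightarrow$) is where you genuinely depart from the paper: the paper reuses Lemma \ref{lem:keylemma} with the roles of $f$ and $g$ swapped on a configuration $a_1<a_2<b_1<b_2$ inside an interval where $g''/f''$ is strictly increasing, and then exhibits an explicit $c$ and two-element $S$ at finite scale; you instead perform a local variational analysis, computing $\partial_a\Delta_h$ and $\partial_b\Delta_h$, showing the $L^4$ terms of $A_gB_f-A_fB_g$ cancel and the $L^5$ term is $\frac{L^5}{12}\bigl(f''g'''-f'''g''\bigr)=-\frac{L^5}{12}(g'')^2\bigl(f''/g''\bigr)'$, and then perturbing in a direction with $\beta>0$. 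I checked the computation and the geometric step: with $\partial_a\Delta_h>0>\partial_b\Delta_h$, the existence of a direction $(\alpha,\beta)$, $\beta>0$, with $\nabla\Delta_f\cdot(\alpha,\beta)>0>\nabla\Delta_g\cdot(\alpha,\beta)$ is exactly equivalent to positivity of $\partial_a\Delta_g\,\partial_b\Delta_f-\partial_a\Delta_f\,\partial_b\Delta_g$, so your assertion is correct, though you should spell out that one-line verification; also, with $f,g$ only $C^3$ the remainder is $o(L^5)$ rather than $O(L^6)$, which still suffices. What your route buys is a transparent identification of the infinitesimal obstruction (the Wronskian-type quantity $f''g'''-f'''g''$) directly from the splitting functional; what the paper's route buys is a finite-scale counterexample with no Taylor expansion and a second use of the key lemma, keeping the whole proof at the level of $C^2$-type concavity arguments.
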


\begin{proof}
$(\Leftarrow)$ Suppose $f''/g''$ is increasing. Fix $c\in(0,1)$, and let S be a finite subset of $[0,c)\times(c,1]$. In light of Remark \ref{rem:pospure} we may suppose $|S|=2$. 

Let $(a_1,b_1),(a_2,b_2)$ be the two elements of $S$, and suppose without loss of generality that $b_1 < b_2$ (if $b_1 = b_2$ then we immediately have equality in
Definition \ref{def:positivelypure} and we are done). So
$a_1,a_2 < c < b_1 <b_2$. We therefore want to show that if $(a_2,b_2)$ is the better
of the two splits with respect to $g$, then $(a_2,b_2)$ is also the better of the two splits with respect to $f$.
More precisely, we want to show that if
\begin{equation}\label{eq:gbetter}
\frac{b_2-c}{b_2-a_2}g(a_2) + \frac{c-a_2}{b_2-a_2}g(b_2) \leq \frac{b_1-c}{b_1-a_1}g(a_1) + \frac{c-a_1}{b_1-a_1}g(b_1) \mbox{ }
\end{equation}
then
\begin{equation}\label{eq:fbetter}
\frac{b_2-c}{b_2-a_2}f(a_2) + \frac{c-a_2}{b_2-a_2}f(b_2) \leq \frac{b_1-c}{b_1-a_1}f(a_1) + \frac{c-a_1}{b_1-a_1}f(b_1).
\end{equation} 
By Lemma \ref{lem:partialequivalence}, we may suppose without loss of generality that $f(a_1)=f(b_1)=g(a_1)=g(b_1)=0$. The above implication then reduces to
\begin{equation}\label{eq:goal}
\frac{b_2-c}{b_2-a_2}g(a_2) + \frac{c-a_2}{b_2-a_2}g(b_2) \leq 0 \quad \Rightarrow \quad \frac{b_2-c}{b_2-a_2}f(a_2) + \frac{c-a_2}{b_2-a_2}f(b_2) \leq 0.
\end{equation}
Strict concavity of $f$ and $g$ together with the fact that $b_1 < b_2$ implies $f(b_2)<0$ and $g(b_2)<0$. If $a_2 \leq a_1$, then $f(a_2)\leq 0$ and the right side of (\ref{eq:goal}) above is satisfied.
So suppose $a_2 > a_1$, so that $f(a_2) > 0$ and $g(a_2) > 0$. Rearranging the inequalities in (\ref{eq:goal}), we get that our desired condition is equivalent to
\begin{equation}\label{eq:equivgoal}
\frac{b_2 g(a_2)-a_2g(b_2)}{g(a_2)-g(b_2)} \leq c \quad \Rightarrow \quad \frac{b_2 f(a_2)-a_2f(b_2)}{f(a_2)-f(b_2)} \leq c.
\end{equation}
It is therefore sufficient to show 
\begin{equation}\label{eq:suffcond}
\frac{b_2 f(a_2)-a_2f(b_2)}{f(a_2)-f(b_2)} \leq \frac{b_2 g(a_2)-a_2g(b_2)}{g(a_2)-g(b_2)}.
\end{equation}
Clearing denominators and simplifying shows that (\ref{eq:suffcond}) is equivalent to
$$
f(b_2)g(a_2)\leq g(b_2)f(a_2),
$$
i.e.,
$$
\frac{f(a_2)}{g(a_2)} \leq \frac{f(b_2)}{g(b_2)}.
$$
But this follows from Lemma \ref{lem:keylemma}, and the desired conclusion follows.

$(\Rightarrow)$ Suppose that $f''/g''$ is not increasing. Since $f,g$ are $C^3$ and have nonvanishing second derivatives, $f''/g''$ is $C^1$. Hence there exists some interval $(a,b)\subseteq[0,1]$ such that $f''/g''$ is strictly decreasing
on $(a,b)$, i.e., $g''/f''$ is strictly increasing on $(a,b)$. 
Choose $a_1,a_2,b_1,b_2$ such that $a\leq a_1<a_2<b_1<b_2\leq b$. By Lemma \ref{lem:partialequivalence}, we may assume without loss of generality that $f(a_1)=f(b_1)=g(a_1)=g(b_1)=0$, so that
$f(a_2),g(a_2)>0$ and $f(b_2),g(b_2)<0$. Then by Lemma \ref{lem:keylemma} (reversing the roles of $f$ and $g$) we have
$$
\frac{g(a_2)}{f(a_2)}< \frac{g(b_2)}{f(b_2)}.
$$
A bit of algebra shows that the above inequality is equivalent to  
\begin{equation}\label{eq:reversesuffcond}
 \frac{b_2 g(a_2)-a_2g(b_2)}{g(a_2)-g(b_2)} < \frac{b_2 f(a_2)-a_2f(b_2)}{f(a_2)-f(b_2)}.
\end{equation}
Choose a $c$ such that
\begin{equation}\label{eq:choosec}
 \frac{b_2 g(a_2)-a_2g(b_2)}{g(a_2)-g(b_2)} < c < \frac{b_2 f(a_2)-a_2f(b_2)}{f(a_2)-f(b_2)}.
\end{equation}
Now
$$
a_2 \leq a_2 + \frac{(b_2-a_2)g(a_2)}{g(a_2)-g(b_2)} = \frac{b_2 g(a_2)-a_2g(b_2)}{g(a_2)-g(b_2)}<c
$$
by (\ref{eq:choosec}). Also, writing 
$$
b_1 = \frac{b_2-b_1}{b_2-a_2}\cdot a_2 + \frac{b_1-a_2}{b_2-a_2}\cdot b_2
$$
and using concavity of $f$, we get
$$
0 = f(b_1) \geq \frac{b_2-b_1}{b_2-a_2}f(a_2) + \frac{b_1-a_2}{b_2-a_2}f(b_2)
$$
which simplifies to
$$
\frac{b_2 f(a_2)-a_2f(b_2)}{f(a_2)-f(b_2)} \leq b_1
$$
so that $c<b_1$ by (\ref{eq:choosec}). We therefore have
$$
a_1<a_2<c<b_1<b_2
$$
with 
$$
 \frac{b_2 g(a_2)-a_2g(b_2)}{g(a_2)-g(b_2)} < c \qquad \mbox{and}\qquad c< \frac{b_2 f(a_2)-a_2f(b_2)}{f(a_2)-f(b_2)}
$$
which rearranges to

\begin{equation}\label{eq:counterex}
\frac{b_2-c}{b_2-a_2}g(a_2) + \frac{c-a_2}{b_2-a_2}g(b_2) < 0 \quad \mbox{and} \quad \frac{b_2-c}{b_2-a_2}f(a_2) + \frac{c-a_2}{b_2-a_2}f(b_2) > 0.
\end{equation}
Recalling that $f(a_1)=f(b_1)=g(a_1)=g(b_1)=0$, we have that (\ref{eq:counterex}) becomes

\begin{align*}
\frac{b_2-c}{b_2-a_2}g(a_2) + \frac{c-a_2}{b_2-a_2}g(b_2) &< \frac{b_1-c}{b_1-a_1}g(a_1) + \frac{c-a_1}{b_1-a_1}g(b_1) \\
\mbox{and}\\
\frac{b_2-c}{b_2-a_2}f(a_2) + \frac{c-a_2}{b_2-a_2}f(b_2) &> \frac{b_1-c}{b_1-a_1}f(a_1) + \frac{c-a_1}{b_1-a_1}f(b_1).\\
\end{align*}
Taking $S = \{(a_1,b_1),(a_2,b_2)\}$, we therefore have
$$
\argmin_{b : (a,b)\in S}\left(\frac{b-c}{b-a}f(a) + \frac{c-a}{b-a}f(b)\right) = b_1 < b_2  = \argmin_{b : (a,b)\in S}\left(\frac{b-c}{b-a}g(a) + \frac{c-a}{b-a}g(b)\right)
$$
so that $f$ does not split more positively purely than $g$.
\end{proof}

Theorem \ref{thm:mainthm} has a corresponding analogue, stated below, for one preimpurity function splitting more negatively purely than another; the proof is very similar and hence omitted.

\begin{theorem}\label{thm:negpure}
Let $f,g$ be preimpurity functions. Then $g$ splits more negatively purely than $f$ if and only if $g''/f''$ is decreasing on $(0,1)$.
\end{theorem}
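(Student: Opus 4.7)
The plan is to reduce Theorem \ref{thm:negpure} to Theorem \ref{thm:mainthm} via the symmetry $p \mapsto 1-p$, which exchanges the roles of Class 0 and Class 1. For any preimpurity function $h$, define $\tilde{h}(p) = h(1-p)$. Then $\tilde{h}$ is continuous on $[0,1]$, $C^3$ on $(0,1)$, and satisfies $\tilde{h}''(p) = h''(1-p) < 0$, so $\tilde{h}$ is itself a preimpurity function.

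Next I would establish a bijection between splits. A split of a node with positive prevalence $c$ whose children have left and right positive prevalences $(a,b)$ with $a \leq c \leq b$ corresponds to a split of a ``flipped'' node with positive prevalence $1-c$ whose children have left and right positive prevalences $(1-b,\,1-a)$, noting that $1-b \leq 1-c \leq 1-a$ so the labels ``left'' and ``right'' swap. A direct computation shows
\[
\frac{b-c}{b-a}\,f(a) + \frac{c-a}{b-a}\,f(b) \;=\; \frac{(1-a)-(1-c)}{(1-a)-(1-b)}\,\tilde{f}(1-b) + \frac{(1-c)-(1-b)}{(1-a)-(1-b)}\,\tilde{f}(1-a),
\]
so impurities, and hence sets of optimal splits, correspond. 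Proposition \ref{prop:possiblesplits} guarantees that every finite collection of candidate splits on either side is realizable, so this bijection translates the quantifiers in Definition \ref{def:positivelypure} applied to $(\tilde{f},\tilde{g})$ into those of Definition \ref{def:negativelypure} applied to $(f,g)$. Under the bijection, ``left child prevalence $a$'' becomes ``right child prevalence $1-a$'', so the $\min$ over $a$ appearing in (\ref{eq:negativelypure}) corresponds to the $\max$ over $b$ appearing in (\ref{eq:positivelypure}). Consequently, $g$ splits more negatively purely than $f$ if and only if $\tilde{g}$ splits more positively purely than $\tilde{f}$.

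Applying Theorem \ref{thm:mainthm} to the pair $(\tilde{f},\tilde{g})$, the latter condition is equivalent to $\tilde{g}''/\tilde{f}''$ being increasing on $(0,1)$. Since $\tilde{g}''(p)/\tilde{f}''(p) = g''(1-p)/f''(1-p)$, monotonic increase in $p$ on $(0,1)$ is equivalent to $g''/f''$ being monotonically decreasing in $q = 1-p$ on $(0,1)$, which is exactly the stated condition.

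The only step requiring real care is the bookkeeping in the split bijection, specifically tracking how the ``left'' and ``right'' labels swap under $p \mapsto 1-p$ so that the $\min$/$\max$ swap cleanly between the two definitions, and verifying that the preimpurity axioms transfer to $\tilde{h}$. Both checks are routine, and no new analytic content is needed beyond Theorem \ref{thm:mainthm}.
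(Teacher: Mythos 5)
Your reduction is correct. The reflection $p\mapsto 1-p$ does everything you claim: $\tilde h(p)=h(1-p)$ is again a preimpurity function since $\tilde h''(p)=h''(1-p)<0$; the map $(c,S)\mapsto\bigl(1-c,\{(1-b,1-a):(a,b)\in S\}\bigr)$ is a bijection of the admissible data in Definitions \ref{def:positivelypure}--\ref{def:negativelypure}; your displayed identity for the split impurity checks out, so optimal splits correspond; and since $x\mapsto 1-x$ is order-reversing, the $\min$ over optimal left prevalences in (\ref{eq:negativelypure}) for $(f,g)$ becomes exactly the $\max$ over optimal right prevalences in (\ref{eq:positivelypure}) for $(\tilde f,\tilde g)$, so the tie-breaking conventions match up. Then Theorem \ref{thm:mainthm} applied to $(\tilde g,\tilde f)$ plus $\tilde g''/\tilde f''(p)=(g''/f'')(1-p)$ gives the stated criterion. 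This is a genuinely different route from what the paper intends: the paper omits the proof, indicating one should rerun the argument of Theorem \ref{thm:mainthm} (and implicitly a mirrored version of Lemma \ref{lem:keylemma}) with the roles of the left and right prevalences swapped and the inequalities reversed. Your approach instead derives Theorem \ref{thm:negpure} as a formal corollary of Theorem \ref{thm:mainthm} via the class-exchange symmetry, which requires no new analytic work and makes the omitted "very similar" proof completely rigorous in a few lines; the paper's route would be self-contained but essentially duplicates the earlier analysis. One small remark: the appeal to Proposition \ref{prop:possiblesplits} is not needed, since the definitions are already phrased purely in terms of $c$ and finite $S\subseteq[0,c)\times(c,1]$, so working at that level suffices; citing it is harmless but does no work in the argument.
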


Theorems \ref{thm:mainthm} and \ref{thm:negpure} immediately establish the relationship between splitting more positively purely and splitting more negatively purely:

\begin{corollary}\label{cor:posneg}
Let $f,g$ be preimpurity functions. Then $f$ splits more positively purely than $g$ if and only if $g$ splits more negatively purely than $f$.
\end{corollary}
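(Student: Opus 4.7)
The plan is to simply chain together Theorem \ref{thm:mainthm} and Theorem \ref{thm:negpure} via a reciprocal argument. By Theorem \ref{thm:mainthm}, the statement ``$f$ splits more positively purely than $g$'' is equivalent to the condition that $f''/g''$ is increasing on $(0,1)$. By Theorem \ref{thm:negpure}, the statement ``$g$ splits more negatively purely than $f$'' is equivalent to the condition that $g''/f''$ is decreasing on $(0,1)$. So it suffices to prove that these two analytic conditions are equivalent.

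To show this equivalence, I would note that since $f$ and $g$ are preimpurity functions, both $f''$ and $g''$ are strictly negative on $(0,1)$. Consequently, the ratio $f''/g''$ is strictly positive on $(0,1)$, and $g''/f''$ is its pointwise reciprocal, also strictly positive. A general fact about strictly positive real-valued functions $\varphi$ on an interval is that $\varphi$ is increasing if and only if $1/\varphi$ is decreasing (which follows directly from the fact that $x \mapsto 1/x$ is a strictly decreasing bijection on $(0,\infty)$). Applying this with $\varphi = f''/g''$ yields immediately that $f''/g''$ is increasing on $(0,1)$ if and only if $g''/f''$ is decreasing on $(0,1)$, which completes the chain of equivalences.

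There is essentially no obstacle here, since all of the work has already been done in Theorems \ref{thm:mainthm} and \ref{thm:negpure}; the only content is the elementary observation about reciprocals of positive monotone functions. The proof can be written in two or three lines.
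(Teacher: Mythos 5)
Your proposal is correct and matches the paper's intent exactly: the paper also derives the corollary immediately from Theorems \ref{thm:mainthm} and \ref{thm:negpure}, with the positivity of $f''/g''$ (from strict concavity) making the reciprocal monotonicity equivalence the only, and trivial, remaining step. Nothing further is needed.
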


\begin{remark}
In Definition \ref{def:positivelypure}, in (\ref{eq:positivelypure}) we broke ties by using $\max$ (i.e., by choosing the optimal split with highest right-child positive prevalence). In fact, we just as well
could have broken ties by using $\min$, and Theorem \ref{thm:mainthm} would still hold; the only modification necessary to the proof would be to replace all inequalities in (\ref{eq:gbetter}),(\ref{eq:fbetter}),(\ref{eq:goal}), and (\ref{eq:equivgoal}) with strict inequalities.
A similar remark of course holds for (\ref{eq:negativelypure}).   
\end{remark}

Corollary \ref{cor:posneg} implies a special case of the following general fact, alluded to in Section \ref{sec:performance} when discussing PPV versus NPV: an impurity function cannot produce an optimal split with both a higher right-child positive prevalence and a lower left-child positive
prevalence than an optimal split produced by another impurity function (assuming, of course, that both impurity functions are optimizing over the same set of splits). In other words, to improve purity in one class, one must sacrifice purity in the other class.
Proposition \ref{prop:puritytradeoff} makes this precise.

\begin{proposition}\label{prop:puritytradeoff}
Let $f,g$ be preimpurity functions, and suppose that $\{(a_1,b_1),(a_2,b_2)\}$ is the set of possible splits of some node with positive prevalence $c$. Suppose further that $(a_1,b_1)$ is optimal for $g$, and $(a_2,b_2)$ is optimal for $f$.
If $b_2>b_1$, then $a_2>a_1$.   
\end{proposition}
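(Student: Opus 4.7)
The plan is to argue the contrapositive: assume $b_2>b_1$ and $a_2\le a_1$, and derive a contradiction with the hypothesis that $(a_1,b_1)$ is optimal for $g$. Combined with the constraints $a_1,a_2<c<b_1,b_2$, these assumptions force $a_2\le a_1<c<b_1<b_2$, so the split $(a_2,b_2)$ weakly dominates $(a_1,b_1)$ in the sense that its left child is weakly more negatively pure and its right child is strictly more positively pure.

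Following the observation underlying Proposition \ref{prop:childweights}, the $g$-impurity of a split $(a,b)$ of a node with prevalence $c$ is, up to the common total-weight factor, the value at $x=c$ of the secant line through $(a,g(a))$ and $(b,g(b))$. Let $L_1$ be the secant through $(a_1,g(a_1))$ and $(b_1,g(b_1))$, and let $L_2$ be the secant through $(a_2,g(a_2))$ and $(b_2,g(b_2))$. The goal reduces to showing $L_1(c)>L_2(c)$, since this strict inequality contradicts optimality of $(a_1,b_1)$ for $g$.

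The key step uses strict concavity of $g$ (condition (2) of Definition \ref{def:preimp}). Since $a_2\le a_1<b_2$, the point $(a_1,g(a_1))$ lies on $L_2$ when $a_2=a_1$ and strictly above $L_2$ when $a_2<a_1$; since $a_2<b_1<b_2$, the point $(b_1,g(b_1))$ lies strictly above $L_2$. A straight line passing through two points that are both weakly above another line, with at least one strictly above, lies strictly above that line at every interior point of the segment (immediate upon writing the difference $L_1(x)-L_2(x)$ as a positive convex combination of the two endpoint gaps). Specializing at $x=c\in(a_1,b_1)$ yields $L_1(c)>L_2(c)$.

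The only delicacy is tracking strict versus weak inequalities: it is essential that the strict separation $b_1<b_2$ forces the right endpoint $(b_1,g(b_1))$ strictly above $L_2$, since this alone drives strict separation of $L_1$ and $L_2$ on the interior even in the edge case $a_2=a_1$. Notably, the argument uses no property of $f$ beyond its existence; the contradiction arises entirely from the behavior of $g$ on the weakly dominant split, consistent with the PPV/NPV tradeoff intuition from Section \ref{sec:performance}.
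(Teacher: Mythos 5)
Your proof is correct and is essentially the paper's own argument in geometric form: the paper invokes Lemma \ref{lem:partialequivalence} to normalize $g(a_2)=g(b_2)=0$, which is precisely your device of measuring everything against the secant $L_2$, and strict concavity then gives $g(a_1)\geq 0$ and $g(b_1)>0$, i.e., your strict separation $L_1(c)>L_2(c)$. The only point the paper covers that you pass over is the degenerate possibility $(a_1,b_1)=(c,c)$ permitted by its conventions (the paper writes $a_2\leq a_1\leq c\leq b_1$), which your argument handles in one line since strict concavity with $a_2<c<b_2$ gives $g(c)>L_2(c)$ directly.
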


\begin{proof}
Suppose for contradiction that $a_2\leq a_1$. By Lemma \ref{lem:partialequivalence}, we may suppose without loss of generality that $g(a_2)=g(b_2)=0$. Then since $a_2\leq a_1 \leq c \leq b_2$, we have $g(a_1)\geq 0$.
Since $b_2>b_1\geq c$, we must also have $a_2<c$ so that $a_2<c\leq b_1<b_2$, giving $g(b_1)>0$ and $g(c)>0$. Then
$$
\frac{b_1-c}{b_1-a_1}g(a_1) + \frac{c-a_1}{b_1-a_1}g(b_1)>0=\frac{b_2-c}{b_2-a_2}g(a_2) + \frac{c-a_2}{b_2-a_2}g(b_2),
$$
so that $(a_1,b_1)$ is not optimal with respect to $g$, a contradiction.
\end{proof}

\begin{remark}
For any split of a node with unit weight we can use the Fundamental Theorem of Calculus and integration by parts to write the total reduction in impurity with respect to $f$ as
\begin{equation}\label{eq:reductionintegrals}
f(c)-\left(\frac{b-c}{b-a} f(a) + \frac{c-a}{b-a} f(b)\right) = \frac{b-c}{b-a}\int_a^c -f''(t)(t-a) \,dt + \frac{c-a}{b-a}\int_c^b -f''(t)(b-t)\,dt.
\end{equation}
From this equation we make a few observations: Firstly, the reduction in impurity depends only on $f''$ and not on the initial values of $f$ or $f'$. This is essentially a restatement of Lemma \ref{lem:partialequivalence}. Secondly, the
right hand side of (\ref{eq:reductionintegrals}) roughly tells us that if the mass of $-f''$ concentrates more to the right side of the unit interval than does the mass
of some other function $-g''$, then an increase in $b$ gives a proportionally larger reduction in impurity with respect to $f$ than with respect to $g$. This is a loose restatement of the backward implication
in Theorem \ref{thm:mainthm}. In general, one achieves a greater reduction in impurity with respect to $f$ by capturing a larger proportion of the mass under $-f''$ between $a$ and $b$, or by making $a$ and $b$
farther away from $c$.
 
\end{remark}

\begin{remark}\label{rem:nonsmoothness}
We suspect Theorem \ref{thm:mainthm} holds in more generality. In particular, suppose $f$ and $g$ are only assumed to be continuous and concave, but not necessarily differentiable or strictly concave.
Then $f''$ and $g''$ exist in the distributional sense as non-positive measures \cite{Sc66}. We then conjecture that $f$ splits more positively purely than $g$ if and only if $f''$ 
is absolutely continuous with respect to $g''$ and the Radon-Nikodym derivative of $f''$ with respect to $g''$ is increasing. Because the proof of this claim (if true) would likely be more involved than
the proofs of Lemma \ref{lem:keylemma} and Theorem \ref{thm:mainthm} without offering much additional insight into the nature of Definition \ref{def:positivelypure}, we do not pursue it.
\end{remark}

Theorem \ref{thm:mainthm} immediately gives us a few corollaries regarding equivalence of preimpurity and impurity functions.

\begin{corollary}\label{cor:equivalence}
Let $f,g$ be preimpurity functions. Then the following are equivalent:
\begin{enumerate} 
\item $f$ is equivalent to $g$.
\item $f''=Ag''$ for some constant $A>0$.
\item There exist constants $A,B,C\in\mathbb{R}$ with $A> 0$ such that $f(x) = Ag(x)+Bx+C.$
\end{enumerate}
\end{corollary}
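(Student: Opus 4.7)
The plan is to close a small three-way cycle, with most of the work delegated to Theorem \ref{thm:mainthm} and Lemma \ref{lem:partialequivalence}. I would start with the easy direction $(3)\Rightarrow(1)$: assuming $f(x)=Ag(x)+Bx+C$ with $A>0$, Lemma \ref{lem:partialequivalence} says exactly that $f$ and $g$ have identical argmins, which is the definition of equivalence.

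Next I would verify $(2)\Leftrightarrow(3)$ by pure calculus on $(0,1)$. If $f''=Ag''$ with $A>0$, integrating twice gives $f'=Ag'+B$ and then $f=Ag+Bx+C$ on $(0,1)$ for some constants $B,C\in\mathbb{R}$; continuity of $f$ and $g$ on $[0,1]$ extends this identity to the closed interval. Conversely, differentiating the formula in (3) twice yields $f''=Ag''$, and positivity of $A$ is preserved. No subtlety here.

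The interesting direction is $(1)\Rightarrow(2)$, and this is where I would lean on the main theorem. If $f$ is equivalent to $g$, then for every $c\in(0,1)$ and every finite $S\subseteq[0,c)\times(c,1]$ the two argmin sets in (\ref{eq:equivalence}) are equal, so in particular
\[
\max\!\left\{\argmin_{b:(a,b)\in S}\!\tfrac{b-c}{b-a}f(a)+\tfrac{c-a}{b-a}f(b)\right\}=\max\!\left\{\argmin_{b:(a,b)\in S}\!\tfrac{b-c}{b-a}g(a)+\tfrac{c-a}{b-a}g(b)\right\}.
\]
This equality means both that $f$ splits more positively purely than $g$ \emph{and} that $g$ splits more positively purely than $f$. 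Applying Theorem \ref{thm:mainthm} in each direction, the ratio $f''/g''$ is increasing on $(0,1)$ and $g''/f''$ is also increasing on $(0,1)$. Since both $f''$ and $g''$ are strictly negative by the definition of preimpurity, $f''/g''$ is a strictly positive function on $(0,1)$ whose reciprocal is also increasing; a positive function and its reciprocal can both be increasing only if the function is constant. Hence $f''/g''\equiv A$ on $(0,1)$, and the sign of $A$ is forced to be positive, giving (2).

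The main (very mild) obstacle is the last algebraic observation: that a positive increasing function with an increasing reciprocal must be constant. This is handled in one line by noting that if $0<x_1<x_2$ and $h(x_1)\le h(x_2)$ while $1/h(x_1)\le 1/h(x_2)$, then $h(x_1)\ge h(x_2)$, forcing equality. Everything else is bookkeeping, so the corollary follows without any genuinely new argument beyond combining Theorem \ref{thm:mainthm} with Lemma \ref{lem:partialequivalence}.
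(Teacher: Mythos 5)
Your proposal is correct and follows essentially the same route as the paper: $(3)\Rightarrow(1)$ via Lemma \ref{lem:partialequivalence}, $(2)\Leftrightarrow(3)$ by integrating/differentiating, and $(1)\Rightarrow(2)$ by applying Theorem \ref{thm:mainthm} in both directions and noting that a positive function which is increasing together with its reciprocal must be constant. Nothing further is needed.
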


\begin{proof}
$(1)\Rightarrow(2)$ Suppose $f$ and $g$ are equivalent. Then $f$ splits more positively purely than $g$, and vice versa. So both $f''/g''$ and $g''/f''$ are increasing by Theorem \ref{thm:mainthm}.
So $f''/g''$ is constant and, by strict concavity of $f$ and $g$, positive. So $f''=Ag''$ for some positive $A$.
 
$(2)\Rightarrow(3)$ This follows from the Fundamental Theorem of Calculus.

$(3)\Rightarrow(1)$ This is Lemma \ref{lem:partialequivalence}.
\end{proof}

\begin{corollary}\label{cor:standardform}
Let $f$ be a preimpurity function. Then there exists a unique (up to positive constant scaling) impurity function $\tilde{f}$ such that $f$ is equivalent to $\tilde{f}$.
\end{corollary}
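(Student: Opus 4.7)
The plan is to reduce everything to Corollary~\ref{cor:equivalence}, which tells us precisely that two preimpurity functions are equivalent iff they differ by an affine transformation $x \mapsto Ax + B$ (with positive leading coefficient) applied to the function values. So I want to find $A > 0$, $B, C \in \mathbb{R}$ such that $\tilde{f}(x) := A f(x) + Bx + C$ vanishes at $x=0$ and $x=1$, and then verify $\tilde{f}$ is still a preimpurity function.

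For existence, take $A = 1$ and define
\[
\tilde{f}(x) = f(x) - (1-x)f(0) - x f(1).
\]
A direct check gives $\tilde{f}(0) = \tilde{f}(1) = 0$. Since we only subtracted an affine function from $f$, the properties $\tilde{f} \in C([0,1]) \cap C^3((0,1))$ and $\tilde{f}'' = f'' < 0$ on $(0,1)$ are inherited from $f$, so $\tilde{f}$ is a preimpurity function, and hence an impurity function. By Lemma~\ref{lem:partialequivalence}, $\tilde{f}$ is equivalent to $f$.

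For uniqueness up to positive scaling, suppose $\tilde{f}_1$ and $\tilde{f}_2$ are both impurity functions equivalent to $f$. Then they are equivalent to each other, so by Corollary~\ref{cor:equivalence} there exist $A > 0$ and $B, C \in \mathbb{R}$ with $\tilde{f}_2(x) = A\tilde{f}_1(x) + Bx + C$. Evaluating at $x = 0$ yields $0 = 0 + 0 + C$, so $C = 0$; evaluating at $x = 1$ then yields $0 = 0 + B$, so $B = 0$. Hence $\tilde{f}_2 = A \tilde{f}_1$, which is the claimed uniqueness.

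There is no real obstacle here; the corollary is essentially a normalization statement, and the content is entirely packaged in Corollary~\ref{cor:equivalence}. The only care required is to check that subtracting the affine interpolant between the endpoint values preserves the preimpurity axioms, which it does trivially because the second derivative is unchanged.
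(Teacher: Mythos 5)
Your proof is correct and follows essentially the same route as the paper: your $\tilde{f}(x)=f(x)-(1-x)f(0)-xf(1)$ is algebraically identical to the paper's normalization, equivalence comes from Lemma~\ref{lem:partialequivalence}, and uniqueness is obtained exactly as in the paper by applying Corollary~\ref{cor:equivalence} and evaluating at the endpoints. No gaps.
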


\begin{proof}
Let $\tilde{f}(x) = f(x) + (f(0) - f(1))x - f(0)$. Then $\tilde{f}$ is an impurity function, and is equivalent to $f$ by Lemma \ref{lem:partialequivalence}.

To establish uniqueness, suppose $\tilde{f}_1$ and $\tilde{f}_2$ are impurity functions equivalent to $f$. Then they are equivalent to each other. So by Corollary \ref{cor:equivalence},
$\tilde{f}_1(x) = A\tilde{f}_2(x) + Bx +C$ for some $A,B,C \in\mathbb{R}$, $A>0$. The boundary conditions $\tilde{f}_1(0)=\tilde{f}_1(1)= \tilde{f}_2(0)=\tilde{f}_2(1)=0$ imply $B=C=0$,
so $\tilde{f}_1=A \tilde{f}_2$. 
\end{proof}

\begin{corollary}
Let $f,g$ be impurity functions. Then $f$ is equivalent to $g$ if and only if $f=Ag$ for some constant $A>0$.
\end{corollary}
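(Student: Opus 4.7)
The plan is to deduce this directly from Corollary \ref{cor:equivalence} by exploiting the additional boundary conditions $f(0)=f(1)=0$ and $g(0)=g(1)=0$ that come with being impurity functions (as opposed to merely preimpurity functions).

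For the forward direction, I would start by assuming $f$ is equivalent to $g$. Corollary \ref{cor:equivalence} immediately gives constants $A>0$ and $B,C \in \mathbb{R}$ such that $f(x) = Ag(x) + Bx + C$ for all $x \in [0,1]$. I would then plug in the two boundary values: evaluating at $x=0$ gives $0 = f(0) = Ag(0) + C = C$, forcing $C=0$; evaluating at $x=1$ gives $0 = f(1) = Ag(1) + B + C = B$, forcing $B=0$. What remains is precisely $f = Ag$ with $A>0$.

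For the reverse direction, if $f = Ag$ for some positive constant $A$, this is just the special case $B=C=0$ of part (3) of Corollary \ref{cor:equivalence} (equivalently, a trivial application of Lemma \ref{lem:partialequivalence} with $B=C=0$), which yields equivalence of $f$ and $g$.

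There is no genuine obstacle here; the statement is essentially a cleanup of Corollary \ref{cor:equivalence} once one accounts for the fact that imposing $f(0)=f(1)=g(0)=g(1)=0$ kills off the two degrees of freedom corresponding to the affine term $Bx+C$. This is also consistent with Corollary \ref{cor:standardform}, which already noted that each equivalence class of preimpurity functions contains a representative impurity function unique up to positive scaling.
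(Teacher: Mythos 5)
Your proof is correct and is essentially the paper's argument: the paper deduces the corollary from Corollary \ref{cor:standardform}, whose uniqueness proof consists of exactly your steps (apply Corollary \ref{cor:equivalence} to get $f = Ag + Bx + C$, then use $f(0)=f(1)=g(0)=g(1)=0$ to force $B=C=0$). You have simply inlined that argument rather than citing the intermediate corollary.
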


\begin{proof}
This follows from Corollary \ref{cor:standardform}.
\end{proof}

Recall the family $h_m$ of impurity functions in (\ref{eqn:MZRimp}) given in the introduction. In light of Theorem \ref{thm:mainthm}, a direct computation shows that $h_{m_1}$ splits more positively
purely than $h_{m_2}$ if and only if $m_1\geq m_2$. (We will revisit this family in more detail in the next section.) For this particular family, moving the ``hump" (i.e. maximizer) of the function to the right is equivalent
to making the function split more positively purely. The next corollary shows that for arbitrary impurity functions, this is partially the case.

\begin{corollary}\label{cor:movehump}
Let $f,g$ be impurity functions, and suppose $f$ splits more positively purely than $g$. Then the maximizer of $f$ is greater than or equal to the maximizer of $g$.
\end{corollary}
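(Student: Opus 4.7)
The plan is to argue by contradiction. Suppose the maximizer $p_f$ of $f$ is strictly less than the maximizer $p_g$ of $g$. By Theorem \ref{thm:mainthm}, $h := f''/g''$ is increasing on $(0,1)$, and since both $f''$ and $g''$ are negative, $h$ is positive throughout.

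The key step is to construct the right auxiliary function. I would set $\lambda := h(p_g) > 0$ and define $k := f - \lambda g$. The crucial features of $k$ are: (i) $k(0) = k(1) = 0$, using the impurity (not merely preimpurity) hypothesis on $f$ and $g$; and (ii) $k''(p) = (h(p) - \lambda) g''(p)$, so that since $g'' < 0$ and $h$ is increasing, $k'' \ge 0$ on $[0,p_g]$ and $k'' \le 0$ on $[p_g,1]$. Hence $k'$ is nondecreasing then nonincreasing, with its global maximum attained at $p_g$.

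Next I would evaluate $k'(p_g) = f'(p_g) - \lambda g'(p_g) = f'(p_g)$, since $g'(p_g) = 0$. Strict concavity of $f$ forces $f'$ to be strictly decreasing, and from $p_f < p_g$ together with $f'(p_f) = 0$ we obtain $f'(p_g) < 0$. Thus the maximum value of $k'$ on $[0,1]$ is strictly negative, so $k$ is strictly decreasing on $[0,1]$, which contradicts $k(0) = k(1) = 0$. This closes the argument.

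The main obstacle is simply finding the right linear combination $k = f - \lambda g$ with the right choice of $\lambda$; once $\lambda = h(p_g)$ is identified, the signs of $k''$ flip exactly at $p_g$ and everything collapses into a one-line monotonicity contradiction. I do not expect any technical subtlety: the boundary conditions $f(0) = f(1) = g(0) = g(1) = 0$ are used only to obtain $k(0) = k(1) = 0$, and $C^3$-smoothness is more than enough to justify the pointwise manipulations of $k''$ and $k'$.
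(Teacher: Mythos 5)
Your argument is correct, and it takes a genuinely different route from the paper's. The paper also studies a difference of the two functions, but it normalizes by function values (scaling so the maxima agree), sets $k=g-f$, and then proves the claim $k>0$ on $(0,m_g)$ by a multi-step contradiction using Rolle's theorem, the Intermediate Value Theorem, and the Mean Value Theorem, exploiting only that $k''=(1-h)g''$ changes sign once somewhere. You instead normalize by second derivatives: choosing $\lambda=(f''/g'')(m_g)$ pins the sign change of $k''=(h-\lambda)g''$ exactly at $m_g$, so $k'$ attains its maximum there with value $k'(m_g)=f'(m_g)$, which is negative once one assumes $m_f<m_g$; then $k$ is strictly decreasing, contradicting $k(0)=k(1)=0$. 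This buys a shorter proof with no case analysis and no auxiliary claim, at the cost of needing $h$ evaluated at the interior point $m_g$ (harmless, since $g''<0$ there) and the interiority of both maximizers (which, as in the paper, follows from strict concavity and the boundary conditions, and is what justifies $f'(m_f)=g'(m_g)=0$). Two cosmetic points: the monotonicity of $h$ from Theorem \ref{thm:mainthm} is weak monotonicity, which is all you use, and you should phrase the sign conditions for $k''$ on the open interval $(0,1)$ only, passing to $k(0)>k(1)$ by continuity rather than asserting strict decrease on the closed interval.
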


\begin{proof}

Let $m_f,m_g\in(0,1)$ be the maximizers of $f$ and $g$, respectively (these maximizers are unique by strict concavity). Scaling $f,g$ by positive contants, we may assume without loss of generality that $g(m_g) = f(m_g) = 1$. 
Let $k = g-f$, so $k(m_f)\leq 0$ and $k(m_g)\geq 0$. If $k(m_g)= 0$ then $g(m_g) = f(m_g) = 1$ so $m_g$
is also the maximizer of $f$ and hence $m_f = m_g$ and we are done. So suppose $k(m_g) > 0$.

Claim: $k>0$ on $(0,m_g)$. 

Proof: Suppose for contradiction that $k(x_0) \leq 0$ for some $x_0\in(0,m_g)$. By Theorem \ref{thm:mainthm}, there exists an increasing $h$ such that $f'' = hg''$, so $k''=(1-h)g''$.
Now $k(m_f)\leq 0$ and $k(m_g)\geq 0$, so  by the Intermediate Value Theorem there exists some $c$ between $m_f$ and $m_g$ such that $k(c)=0$. In particular, $c\in(0,1)$, so
$k$ has at least three zeroes (since also $k(0)=k(1)=0$). Applying Rolle's Theorem to $k$ and $k'$, we then get that $k'$ has at least two zeroes, and $k''$ has at least one zero $d$.
Since $h$ is increasing and $g''<0$, we have that $h(d) =1$ and therefore
\begin{equation}\label{eq:eventuallyconvex} 
k''\leq 0 \enskip\mbox{on} \enskip(0,d) \qquad \mbox{and} \qquad k''\geq 0 \enskip\mbox{on} \enskip(d,1).
\end{equation}
By the Intermediate Value Theorem there exists an $x_1\in(x_0,m_g)$ such that $k(x_1) = k(m_g)/2$. Then
$0<x_0<x_1$ and
$$
k(x_0) \leq 0 < \frac{x_1-x_0}{x_1 - 0}k(0)+\frac{x_0 - 0}{x_1 - 0}k(x_1)
$$
so that $k$ cannot be concave on $(0,x_1)$. Hence, $k''$ takes on a positive value at some point in $(0,x_1)$. Therefore by (\ref{eq:eventuallyconvex}) we have $x_1\geq d$ and hence $k''\geq0$ on $(x_1,1)$.
Now by the Mean Value Theorem there exists some $x_2\in(x_1,m_g)$ such that
$$
k'(x_2) = \frac{k(m_g)-k(x_1)}{m_g-x_1}  = \frac{k(m_g)}{2(m_g-x_1)} \geq 0.
$$
Therefore $k'\geq k'(x_2)\geq 0$ on $(x_2,1)$ and therefore $k$ is increasing on $[m_g,1]$. In particular, $k(m_g) \leq k(1) = 0$, giving a contradiction and therefore proving our claim.

Finally, since $k>0$ on $(0,m_g)$ and $k(m_f)\leq 0$, we must therefore have $m_f\geq m_g$ as desired.   

\end{proof}

\begin{remark}
The converse to Corollary \ref{cor:movehump} is false as can be seen by taking, for example, $f(p) = p^5-5p^3+4p$ and $g(p)=p-p^2$.
\end{remark}

\section{Equivalence of Class Weighting to Transformation of the Impurity Function}\label{sec:weighting}
As mentioned in the introduction, a common way to bias a tree's construction toward performance on a specific class is by class weighting. As the previous section shows, another way to do this is
to choose an asymmetric impurity function to determine optimal splits. In this section we will see that class weighting gives rise to the exact same optimal splits as the optimal splits one obtains by transforming the
impurity function in a specific way. We will also see exactly how and when class weighting relates to the preceding section.

\begin{definition}\label{def:phi_w} 
For $w>0$, define $\phi_w:[0,1]\rightarrow[0,1]$ by
$$
\phi_w(p)\coloneqq \frac{wp}{1+(w-1)p}.
$$
\end{definition}

Suppose we have a node $n$ with positive prevalence $c$ and total weight $W$. Then $n$ has Class 0 weight equal to $W(1-c)$ and Class 1 weight equal to $Wc$.
If we transform $n$ into $\tilde n$ by scaling the weights of all Class 1 points in $n$ by a factor of $w$, then this transformed node $\tilde n$ still has Class 0 weight equal to $W(1-c)$ but now has Class 1 weight equal to $Wwc$, giving
$\tilde n$ an overall weight of $W(1-c)+Wwc=W(1+(w-1)c)$. The positive prevalence of $\tilde n$ is therefore equal to $Wwc/W(1+(w-1)c)=\phi_w(c)$. Now if the original unweighted node $n$ has a split into
children with positive prevalences $a$ and $b$, then similar reasoning as above shows that the children of the transformed node $\tilde n$ under the same split will have positive prevalences equal to $\phi_w(a)$ and $\phi_w(b)$. If we use preimpurity
function $f$ to determine node impurity, then this split of $\tilde n$ has total impurity equal to
$$
W(1+(w-1)c)\cdot\left(\frac{\phi_w(b)-\phi_w(c)}{\phi_w(b)-\phi_w(a)}\cdot f(\phi_w(a)) + \frac{\phi_w(c)-\phi_w(a)}{\phi_w(b)-\phi_w(a)} \cdot f(\phi_w(b))\right)
$$
by Proposition \ref{prop:childweights}.
Therefore, given a node $n$ with positive prevalence $c$, together with a collection $S$ of possible splits and a weighting factor $w$, the optimal split of the weighted node $\tilde n$ is given by
\begin{align*}
&\argmin_{(a,b)\in S}\left( W(1+(w-1)c)\cdot\left(\frac{\phi_w(b)-\phi_w(c)}{\phi_w(b)-\phi_w(a)}\cdot f(\phi_w(a)) + \frac{\phi_w(c)-\phi_w(a)}{\phi_w(b)-\phi_w(a)} \cdot f(\phi_w(b))\right)\right)\\
=&\argmin_{(a,b)\in S}\left(\frac{\phi_w(b)-\phi_w(c)}{\phi_w(b)-\phi_w(a)}\cdot f(\phi_w(a)) + \frac{\phi_w(c)-\phi_w(a)}{\phi_w(b)-\phi_w(a)} \cdot f(\phi_w(b))\right).
\end{align*}

\begin{definition}\label{def:T_w} 
Let $w>0$. Define the transformation $T_w$ on the set of functions $f$ on $[0,1]$ by
$$
(T_wf)(p) = (1+(w-1)p)\cdot(f\circ\phi_w)(p).
$$
\end{definition}

The preceding definitions and discussion put us in a position to quickly prove the first main theorem of this section:

\begin{theorem}\label{thm:weightequiv}
Let $f$ be a preimpurity function and $w>0$. Let $n$ be a node, and let $\tilde n$ be the node obtained from $n$ by scaling the weights of the Class 1 points by $w$. Then the optimal split of $\tilde n$ with respect
to $f$ is the same as the optimal split of $n$ with respect to $T_wf$. In other words: for every preimpurity function $f$, every $w>0$, every $c\in(0,1)$, and every $S\subseteq([0,c)\times(c,1])\cup\{(c,c)\}$ we have
\begin{align*}
&\argmin_{(a,b)\in S}\left(\frac{\phi_w(b)-\phi_w(c)}{\phi_w(b)-\phi_w(a)}\cdot f(\phi_w(a)) + \frac{\phi_w(c)-\phi_w(a)}{\phi_w(b)-\phi_w(a)} \cdot f(\phi_w(b))\right)\\
 = &\argmin_{(a,b)\in S}\left(\frac{b-c}{b-a}\,T_wf(a) + \frac{c-a}{b-a}\,T_wf(b)\right).
\end{align*}
\end{theorem}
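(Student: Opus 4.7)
The plan is to show that the two objective functions inside the argmins differ only by a positive multiplicative constant that depends on $c$ but not on $(a,b) \in S$; since the argmin over $S$ is invariant under such rescaling, equality of the argmin sets follows immediately.

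First I would compute the three differences $\phi_w(b)-\phi_w(a)$, $\phi_w(b)-\phi_w(c)$, and $\phi_w(c)-\phi_w(a)$. Using $\phi_w(p)=\frac{wp}{1+(w-1)p}$ and clearing a common denominator, each of these simplifies cleanly: for instance,
\begin{equation*}
\phi_w(b)-\phi_w(a) \;=\; \frac{w(b-a)}{(1+(w-1)a)(1+(w-1)b)},
\end{equation*}
and analogously for the other two, with the corresponding endpoints replaced. The key observation here is that the factor of $w$ cancels in every ratio, and the factors $1+(w-1)a$, $1+(w-1)b$, $1+(w-1)c$ recombine in a predictable way.

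Next I would form the two weight ratios appearing on the left-hand side and get
\begin{align*}
\frac{\phi_w(b)-\phi_w(c)}{\phi_w(b)-\phi_w(a)} &= \frac{(b-c)(1+(w-1)a)}{(b-a)(1+(w-1)c)}, \\
\frac{\phi_w(c)-\phi_w(a)}{\phi_w(b)-\phi_w(a)} &= \frac{(c-a)(1+(w-1)b)}{(b-a)(1+(w-1)c)}.
\end{align*}
Substituting these into the left-hand objective and recognizing $(1+(w-1)a)f(\phi_w(a)) = T_wf(a)$ and $(1+(w-1)b)f(\phi_w(b)) = T_wf(b)$ yields exactly $\frac{1}{1+(w-1)c}$ times the right-hand objective. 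Since $1+(w-1)c > 0$ for all $w>0$ and $c\in[0,1]$, and since this factor does not depend on $(a,b)$, the two argmins coincide. The degenerate case $(a,b)=(c,c)$ is handled automatically by the convention adopted in Section \ref{sec:conventions}, since both sides collapse to scalar multiples of $f(\phi_w(c))$.

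The main obstacle is essentially just bookkeeping: one has to resist the temptation to expand everything out and instead notice that the $w$ cancels in all three ratios and that the denominators $1+(w-1)a$ and $1+(w-1)b$ are precisely what is needed to convert $f\circ\phi_w$ into $T_wf$. Once the ratios are written in the factored form above, the rest is a one-line rearrangement. No use of concavity, smoothness, or any of the results of Section \ref{sec:mainsec} is needed for this theorem; it is purely a computational identity between the weighted splitting problem and the $T_w$-transformed splitting problem.
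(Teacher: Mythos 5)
Your proposal is correct and follows essentially the same route as the paper: both compute the weight ratios $\frac{\phi_w(b)-\phi_w(c)}{\phi_w(b)-\phi_w(a)}=\frac{(b-c)(1+(w-1)a)}{(b-a)(1+(w-1)c)}$ and its companion, recognize the factors $1+(w-1)a$, $1+(w-1)b$ as converting $f\circ\phi_w$ into $T_wf$, and conclude that the weighted objective is $\frac{1}{1+(w-1)c}$ times the $T_wf$ objective, so the argmins agree. No gaps; this is the paper's ``direct computation'' written out.
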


\begin{proof}
Fix $f,w,c,S$ as above. Then a direct computation shows that for all $(a,b)\in S$ we have
\begin{align*}
&(1+(w-1)c)\cdot\left(\frac{\phi_w(b)-\phi_w(c)}{\phi_w(b)-\phi_w(a)}\  f(\phi_w(a)) + \frac{\phi_w(c)-\phi_w(a)}{\phi_w(b)-\phi_w(a)}\ f(\phi_w(b))\right)\\
=\ &(1+(w-1)c)\cdot\left(\frac{b-c}{b-a}\cdot\frac{1+(w-1)a}{1+(w-1)c}\cdot f(\phi_w(a)) + \frac{c-a}{b-a}\cdot\frac{1+(w-1)b}{1+(w-1)c}\cdot f(\phi_w(b))\right)\\
=\ &\frac{b-c}{b-a}\cdot(1+(w-1)a)\cdot f(\phi_w(a)) + \frac{c-a}{b-a}\cdot(1+(w-1)b)\cdot f(\phi_w(b))\\
=\ &\frac{b-c}{b-a}\,T_wf(a) + \frac{c-a}{b-a}\,T_wf(b).
\end{align*}
\end{proof}

\begin{remark}
The proof of Theorem \ref{thm:weightequiv} shows that not only are the optimal splits with respect to $T_wf$ the same as the optimal weighted splits with respect to $f$, but in fact by multiplying all of the above equations
by the total weight $W$ of $n$ we see that for {\it every} split the {\it value} of the impurity
of the split with respect to $T_wf$ is equal to the value of the impurity of the weighted split with respect to $f$.
\end{remark}

We now list some properties of $T_w$.

\begin{proposition}\label{prop:Tw}
Let $f,g$ be preimpurity functions, and let $w,w_1,w_2>0$. Then:
\begin{enumerate}
\item $T_w f$ is a preimpurity function. 

\item $T_{w_1}T_{w_2}=T_{w_1w_2}$.

\item $T_1=id$ and $T_{w}^{-1}=T_{1/w}$.

\item $f$ splits more positively purely than $g$ if and only if $T_wf$ splits more positively purely than $T_wg$.
\end{enumerate}
\end{proposition}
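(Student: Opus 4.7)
The plan is to handle the four parts in order, where all of (1)--(4) hinge on a single explicit formula for $(T_wf)''$. Write $u(p)=1+(w-1)p$, so that $\phi_w(p)=wp/u(p)$ and $(T_wf)(p)=u(p)\,f(\phi_w(p))$. Note that $u>0$ on $[0,1]$ (since $u(0)=1$ and $u(1)=w$, with $u$ linear), and a quick quotient-rule computation gives $\phi_w'(p)=w/u(p)^2>0$. Differentiating once yields $(T_wf)'(p)=(w-1)f(\phi_w(p))+\frac{w}{u(p)}f'(\phi_w(p))$, and differentiating again, the two contributions involving $f'(\phi_w(p))$ carry opposite signs and cancel, leaving
\[
(T_wf)''(p)=\frac{w^2}{u(p)^3}\,f''(\phi_w(p)).
\]
Since $u(p)^3>0$ and $f''<0$ on $(0,1)$, this proves $(T_wf)''<0$. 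Continuity of $T_wf$ on $[0,1]$ and $C^3$-smoothness on $(0,1)$ come for free from composing $f$ with the smooth positive-denominator map $\phi_w$, so part (1) is done.

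For (2), I would verify $\phi_{w_1}\circ\phi_{w_2}=\phi_{w_1w_2}$ by direct substitution — clearing the compound fraction reduces the numerator and denominator of $\phi_{w_1}(\phi_{w_2}(p))$ to $w_1w_2 p$ and $1+(w_1w_2-1)p$, respectively — and then check that the prefactor
\[
(1+(w_1-1)p)\bigl(1+(w_2-1)\phi_{w_1}(p)\bigr)
\]
collapses to $1+(w_1w_2-1)p$ after the same simplification. Multiplying these out shows $(T_{w_1}T_{w_2}f)(p)=(T_{w_1w_2}f)(p)$. Part (3) is then immediate: $\phi_1=\mathrm{id}$ and $u\equiv 1$ give $T_1=\mathrm{id}$, and (2) yields $T_w T_{1/w}=T_{w\cdot 1/w}=T_1=\mathrm{id}$.

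The substantive content is part (4), and here the boxed formula above makes things clean. Dividing gives
\[
\frac{(T_wf)''(p)}{(T_wg)''(p)}=\frac{f''(\phi_w(p))}{g''(\phi_w(p))}=\left(\frac{f''}{g''}\right)\circ\phi_w(p).
\]
Since $\phi_w'>0$ and $\phi_w$ maps $[0,1]$ onto $[0,1]$, the map $\phi_w$ is a strictly increasing bijection of $(0,1)$, and precomposition with an increasing bijection preserves monotonicity on $(0,1)$. Hence $(T_wf)''/(T_wg)''$ is increasing on $(0,1)$ iff $f''/g''$ is. Applying Theorem~\ref{thm:mainthm} to both sides of this equivalence gives (4).

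The only real obstacle is the second-derivative bookkeeping in the proof of (1); the cancellation of the $f'$ terms is what makes the ratio $(T_wf)''/(T_wg)''$ come out as a clean composition and lets part (4) reduce instantly to Theorem~\ref{thm:mainthm}. Everything else is routine algebra.
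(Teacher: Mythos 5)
Your proof is correct and follows essentially the same route as the paper's: the key identity $(T_wf)''(p)=\frac{w^2}{(1+(w-1)p)^3}\,f''(\phi_w(p))$ drives parts (1) and (4), parts (2)--(3) are direct computations, and part (4) reduces to Theorem \ref{thm:mainthm} via $(T_wf)''/(T_wg)''=(f''/g'')\circ\phi_w$. The only (harmless) difference is that for the reverse implication in (4) you exploit that $\phi_w$ is an increasing bijection of $(0,1)$, whereas the paper instead applies the forward implication to $T_wf,T_wg$ with $T_{1/w}$ using parts (2)--(3).
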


\begin{proof}

(1) Firstly, note that smoothness of $f$ is preserved since $T_wf$ is a precomposition and product of $f$ with smooth functions. Secondly, a direct computation shows  
\begin{equation}\label{eq:Twfconcave}
(T_wf)''(p) = \frac{w^2}{(1+(w-1)p)^3}(f''\circ\phi_w)(p)
\end{equation}
which is negative for $p\in(0,1)$ since $f''<0$, so strict concavity is preserved. So $T_w f$ is a preimpurity function.

(2),(3) These are direct computations and are left as an exercise to the reader.

(4) $(\Rightarrow)$ Suppose $f$ splits more positively purely than $g$. Then $f''/g''$ is increasing by Theorem \ref{thm:mainthm}. Equation (\ref{eq:Twfconcave}) above then gives
$$
\frac{(T_wf)''(p)}{(T_wg)''(p)} = \frac{\frac{w^2}{(1+(w-1)p)^3}(f''\circ\phi_w)(p)}{\frac{w^2}{(1+(w-1)p)^3}(g''\circ\phi_w)(p)} = \left(\frac{f''}{g''}\circ\phi_w\right)(p)
$$
which is increasing since $f''/g''$ and $\phi_w$ are increasing. So $T_wf$ splits more positively purely than $T_wg$ by Theorem \ref{thm:mainthm}.

$(\Leftarrow)$ Suppose $T_wf$ splits more positively purely than $T_wg$. Apply the forward implication of Part (4) to $T_wf$ and $T_wg$ using $T_{1/w}$ and Part (3).
\end{proof}

\begin{remark}
As it turns out, the family $h_m$ of functions in (\ref{eqn:MZRimp}) given in the introduction can be expressed in the form $T_wf$ (up to constant scaling) for some $f$. Specifically,
$$
h_m=\frac{1}{2(1-m)^2}\,T_wg
$$
where $w=(\frac{1}{m}-1)^2$ and $g$ is the Gini impurity. In other words, the tree produced by using the impurity function $h_m$ is the same as the tree produced by first weighting the Class 1 points by $(\frac{1}{m}-1)^2$
and then growing the tree using the Gini impurity.

Not every asymmetric impurity function $f$ is of the form $T_wg$ for some symmetric $g$. For example, let $f(p)=p-p^3$. If $f$ were of the form $T_wg$ for some symmetric $g$, then we would have $T_{1/w}f=g$, so that $T_{1/w}f$ is symmetric,
implying $(T_{1/w}f)''$ is symmetric. But this is never the case for any $w>0$ since $(T_{1/w}f)''(0)=0$ and $(T_{1/w}f)''(1)<0$.    
\end{remark}

Recall the plots shown in Figure \ref{fig:splits} in Section \ref{sec:performance}. For that specific example we saw that the Gini impurity after weighting the Class 1 points by a factor of 1/2 split more positively purely than the Gini impurity on the unweighted set, which in
turn split more positively purely than the Gini impurity after weighting the Class 1 points by a factor of 5. Indeed, this is an instance of a more general phenomenon, defined below.

\begin{definition}\label{def:respectweight}
Let $f$ be a preimpurity function. We say $f$ {\it respects class weighting} if for all $w_1,w_2>0$
$$
w_1\leq w_2 \Rightarrow T_{w_1}f\ \mbox{splits more positively purely than}\  T_{w_2}f.
$$
\end{definition}

The above condition can be rather messy to check as it potentially requires verifying that the inequality 
$$
\left(\frac{(T_{w_1}f)''}{(T_{w_2}f)''}\right)'(p) \geq 0
$$
holds for all appropriate values for the three quantities $p,w_1,w_2$. The following lemma allows us to reduce some of the computational messiness by eliminating one of the $w_i$.

\begin{lemma}\label{lem:simplerespect}
Let f be a preimpurity function. Then $f$ respects class weighting if and only if for all $w$ 
\begin{equation}\label{eq:simplerespect}
w\geq 1 \Rightarrow f\ \mbox{splits more positively purely than}\  T_wf.
\end{equation}
\end{lemma}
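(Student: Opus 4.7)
The plan is to derive this as a nearly immediate consequence of the group-like structure of $\{T_w\}_{w>0}$ given in Proposition \ref{prop:Tw}, parts (2), (3), and (4). The key observation is that the transformations $T_w$ form a one-parameter multiplicative group (with $T_1 = \mathrm{id}$ and $T_w^{-1} = T_{1/w}$), and that splitting more positively purely is preserved under any $T_w$. This lets me translate the two-parameter condition in the definition of ``respects class weighting'' into a one-parameter condition.

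For the forward direction, if $f$ respects class weighting, then applying the definition with $w_1 = 1$ and $w_2 = w$ for any $w \geq 1$ gives $T_1 f = f$ splits more positively purely than $T_w f$, which is exactly (\ref{eq:simplerespect}).

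For the reverse direction, suppose (\ref{eq:simplerespect}) holds, and let $0 < w_1 \leq w_2$. Set $w \coloneqq w_2 / w_1 \geq 1$. By hypothesis, $f$ splits more positively purely than $T_w f$. Applying Proposition \ref{prop:Tw}(4) with the transformation $T_{w_1}$, we conclude that $T_{w_1} f$ splits more positively purely than $T_{w_1}(T_w f)$. By Proposition \ref{prop:Tw}(2), $T_{w_1} T_w = T_{w_1 w} = T_{w_2}$, so $T_{w_1} f$ splits more positively purely than $T_{w_2} f$. Hence $f$ respects class weighting.

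There is essentially no obstacle here; the entire content of the lemma is the observation that ``splits more positively purely than'' is a $T_w$-invariant relation and that $T_w$ acts as a group, which together let us normalize one of the two weights to $1$. Nothing new about $f$ itself needs to be proved.
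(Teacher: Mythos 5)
Your proof is correct and is essentially identical to the paper's: the forward direction specializes Definition \ref{def:respectweight} to $w_1=1$, $w_2=w$, and the reverse direction normalizes via $w=w_2/w_1$ and applies Proposition \ref{prop:Tw}, Parts (2) and (4), with $T_{w_1}$. Nothing to add.
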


\begin{proof}
$(\Rightarrow)$ Let $w_1=1,w_2=w$ in Definition \ref{def:respectweight}.

$(\Leftarrow)$ Let $0<w_1\leq w_2$. Letting $w=w_2/w_1\geq1$ in (\ref{eq:simplerespect}) we get that $f$ splits more positively purely than $T_{w_2/w_1}f$. Applying Proposition \ref{prop:Tw}, Parts (2) and (4) using $T_{w_1}$ we get
$T_{w_1}f$ splits more positively purely than $T_{w_1}(T_{w_2/w_1}f) = T_{w_2}f$, as desired.
\end{proof}

In fact, we can fully characterize all preimpurity functions that respect class weighting (though we will need to impose an additional order of smoothness). This is the second main theorem of this section, and it ties together
Sections \ref{sec:mainsec} and \ref{sec:weighting}. To facilitate the presentation of the proof, we first list several equations whose proofs are direct computations and therefore omitted.

\begin{lemma}\label{lem:someeqns}
For all $w>0$ and $p\in(0,1)$ we have
\begin{align*}
\frac{1}{(1+(w-1)p)^2}&=\frac{(w-(w-1)\phi_w(p))^2}{w^2},\\
\frac{\partial}{\partial w}\phi_w(p)&=\frac{\phi_w(p)(1-\phi_w(p))}{w},\\
\phi_w'(p)&=\frac{(w-(w-1)\phi_w(p))^2}{w}, \quad and \\
\frac{\partial}{\partial w}\phi_w'(p)&=\frac{(1-2\phi_w(p))(w-(w-1)\phi_w(p))^2}{w^2}.
\end{align*}
\end{lemma}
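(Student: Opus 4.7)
The plan is to verify each identity by direct computation from $\phi_w(p) = wp/(1+(w-1)p)$, arranged so that earlier identities can be reused in later ones.

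The natural first step is identity (1), which serves as the algebraic backbone for (3) and (4). Combining the terms of $w - (w-1)\phi_w(p)$ over the common denominator $1+(w-1)p$ collapses them to $w/(1+(w-1)p)$; squaring and dividing by $w^2$ yields (1). Identity (3) then follows almost immediately: the $p$-quotient rule applied to $\phi_w$ gives $\phi_w'(p) = w/(1+(w-1)p)^2$, which is exactly $w$ times the right-hand side of (1).

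For (2) I would apply the quotient rule in $w$ to $\phi_w(p)$ to obtain $\partial_w\phi_w(p) = p(1-p)/(1+(w-1)p)^2$, and then expand $\phi_w(p)(1-\phi_w(p))/w$ directly. The same common-denominator trick used for (1) also gives $1-\phi_w(p) = (1-p)/(1+(w-1)p)$, which makes the expansion of the product collapse cleanly onto $p(1-p)/(1+(w-1)p)^2$, matching the derivative.

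Finally for (4), I would differentiate $\phi_w'(p) = w/(1+(w-1)p)^2$ in $w$ to get $(1-(w+1)p)/(1+(w-1)p)^3$, and then match it with the stated right-hand side by invoking (1) to dispose of the squared factor and noting $1 - 2\phi_w(p) = (1-(w+1)p)/(1+(w-1)p)$, again by the common-denominator reduction. No step is genuinely hard; the only care needed is bookkeeping, so that the occurrences of $1+(w-1)p$ and $w-(w-1)\phi_w(p)$ are translated between each other consistently through identity (1), keeping the shared denominator visible throughout.
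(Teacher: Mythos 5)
Your computations are correct: all four identities follow by exactly the direct calculations you describe, with the key reductions $w-(w-1)\phi_w(p)=w/(1+(w-1)p)$, $1-\phi_w(p)=(1-p)/(1+(w-1)p)$, and $1-2\phi_w(p)=(1-(w+1)p)/(1+(w-1)p)$ all checking out. The paper omits the proof precisely because it is this routine direct computation, so your argument is essentially the intended one.
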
  

\begin{theorem}\label{thm:sectionlink}
Let $f$ be a preimpurity function, and suppose $f$ is $C^4$ on $(0,1)$. Let $H=\log(-f'')'=f'''/f''$, and define $G$ on $(0,1)$ by
$$
G(p)\coloneqq p(p-1)H'(p)+(2p-1)H(p)+3.
$$
Then $f$ respects class weighting if and only if $G\geq 0$.
\end{theorem}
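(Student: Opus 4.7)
The plan is to reformulate ``$f$ respects class weighting'' as a monotonicity statement in $w$ for the function
$$
\psi(p,w):=\partial_p\log(-(T_wf)''(p)),
$$
and then to show by direct computation that this monotonicity is governed precisely by the sign of $G\circ\phi_w$.

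First, combining Lemma \ref{lem:simplerespect} with Theorem \ref{thm:mainthm}, $f$ respects class weighting if and only if for every $w\geq 1$ the ratio $f''/(T_wf)''$ is non-decreasing on $(0,1)$. Since both second derivatives are negative, this ratio is positive, and taking its logarithmic derivative in $p$ the condition becomes
$$
\psi(p,w)\leq \psi(p,1)\quad\text{for all }p\in(0,1)\text{ and all }w\geq 1,
$$
where $\psi(p,1)=H(p)$. Because $\psi$ is smooth in $w$ (the $C^4$ hypothesis is needed here), this is in turn equivalent to $\partial_w\psi(p,w)\leq 0$ for all $w\geq 1$, at every fixed $p$.

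Second, I would compute $\psi$ explicitly from equation (\ref{eq:Twfconcave}):
$$
\psi(p,w)=-\frac{3(w-1)}{1+(w-1)p}+\phi_w'(p)\,H(\phi_w(p)).
$$
The heart of the proof is then to differentiate this in $w$ using the four identities of Lemma \ref{lem:someeqns}. Writing $q=\phi_w(p)$ and $s=w-(w-1)q$ (so that $\phi_w'(p)=s^2/w$ and $1+(w-1)p=w/s$), and combining $\partial_w q=q(1-q)/w$ with $\partial_w s=(1-q)s/w$, all the terms collect and telescope into the identity
$$
\partial_w\psi(p,w) = -\frac{s^2}{w^2}\,G(\phi_w(p)).
$$
This collapse — in which the combination $q(q-1)H'(q)+(2q-1)H(q)+3$ emerges from a priori messy second-derivative bookkeeping — is the principal obstacle. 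It is not conceptually difficult so much as delicate, since a naive application of the chain rule leaves $\partial_w\psi$ as a cumbersome rational expression until the identities of Lemma \ref{lem:someeqns} are fully deployed.

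With this identity in hand the theorem follows quickly. Since $s^2/w^2>0$, the sign of $\partial_w\psi(p,w)$ is opposite to that of $G(\phi_w(p))$. If $G\geq 0$ on $(0,1)$, then because $\phi_w$ maps $(0,1)$ into $(0,1)$ we have $\partial_w\psi(p,w)\leq 0$ for all $w>0$ and $p\in(0,1)$, so $\psi(p,\cdot)$ is non-increasing on $(0,\infty)$ and in particular $\psi(p,w)\leq\psi(p,1)$ for $w\geq 1$. Conversely, if $G(p_0)<0$ for some $p_0\in(0,1)$, then evaluating at $p=p_0$ and $w=1$ (so $\phi_w(p)=p_0$ and $s=1$) gives $\partial_w\psi(p_0,w)|_{w=1}=-G(p_0)>0$, so $\psi(p_0,w)>\psi(p_0,1)$ for $w$ slightly greater than $1$, contradicting the reformulation above.
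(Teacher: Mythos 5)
Your proposal is correct and is essentially the paper's own argument in different notation: your $\psi(p,w)=\partial_p\log\bigl(-(T_wf)''(p)\bigr)$ equals $H(p)-F(w,p)$ for the paper's $F$, your identity $\partial_w\psi=-\tfrac{s^2}{w^2}\,G(\phi_w(p))$ is exactly the paper's computation of $\partial F/\partial w$ up to sign, and your endgame (monotonicity in $w$ when $G\geq 0$, first-order behavior at $w=1$ for the converse) matches the paper's Fundamental-Theorem-of-Calculus and difference-quotient steps. One minor caveat: your assertion that the reformulation is \emph{equivalent} to $\partial_w\psi\leq 0$ for all $w\geq 1$ at each fixed $p$ merely by smoothness is not valid as stated (a function can stay below its value at $w=1$ without being monotone), but this is harmless since your final paragraph argues both directions without using it.
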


\begin{proof}
First, observe that by Lemma \ref{lem:simplerespect} and Theorem \ref{thm:mainthm} we have
\begin{align*}
f\ \mbox{respects class weighting}\iff & \mbox{for all } w\geq 1 \quad f\ \mbox{splits more positively purely than}\  T_wf\\
\iff & \mbox{for all } w\geq 1 \quad \frac{f''}{(T_wf)''}\ \mbox{is increasing on }(0,1)\\
\iff & \mbox{for all } w\geq 1 \quad \log\left(\frac{f''}{(T_wf)''}\right)\ \mbox{is increasing on }(0,1)\\
\iff & \mbox{for all } w\geq 1 \quad \log\left(\frac{f''}{(T_wf)''}\right)' \geq 0 \mbox{ on }(0,1)\\
\iff & \mbox{for all } w\geq 1 \mbox{ and all } p\in (0,1)  \quad \log\left(\frac{f''}{(T_wf)''}\right)'(p) \geq 0.
\end{align*}
Define the function $F$ on $[1,\infty)\times(0,1)$ by
\begin{align*}
F(w,p) \coloneqq& \log\left(\frac{f''}{(T_wf)''}\right)'(p)\\
=& \log\left(\frac{(1+(w-1)p)^3\cdot f''(p)}{w^2 \cdot (f''\circ\phi_w)(p)}\right)'\\
=& \frac{f'''(p)}{f''(p)}+\frac{3(w-1)}{1+(w-1)p}-\frac{(f'''\circ\phi_w)(p)\cdot\phi_w'(p)}{(f''\circ\phi_w)(p)}
\end{align*}
where we used (\ref{eq:Twfconcave}) for the second equality.  
We therefore want to show $F \geq 0 \iff G\geq0$. Note that $F$ is $C^1$ by our hypothesis on $f$. We compute the partial derivative of $F$ with respect to $w$ and simplify using Lemma \ref{lem:someeqns}:
\begin{align*}
\frac{\partial F}{\partial w} (w,p) &= \frac{3}{(1+(w-1)p)^2}\\
&\quad - \left(\frac{\left[\frac{\partial}{\partial w}(f'''\circ\phi_w)(p)\cdot\phi_w'(p)+(f'''\circ\phi_w)(p)\cdot\frac{\partial}{\partial w}\phi_w'(p)\right]\cdot(f''\circ\phi_w)(p) }{(f''\circ\phi_w)(p)^2}\right.\\
& \hspace{1cm}- \left.\frac{(f'''\circ\phi_w)(p)\cdot\phi_w'(p) \cdot\frac{\partial}{\partial w} (f''\circ\phi_w)(p)}{(f''\circ\phi_w)(p)^2}\right)\\
&= \frac{3}{(1+(w-1)p)^2}\\
&\quad - \frac{(f^{(4)}\circ\phi_w)(p)\cdot\frac{\partial}{\partial w}\phi_w(p)\cdot \phi_w'(p) \cdot (f''\circ\phi_w)(p) }{(f''\circ\phi_w)(p)^2}\\
&\quad - \frac{(f'''\circ\phi_w)(p)\cdot\frac{\partial}{\partial w}\phi_w'(p)}{(f''\circ\phi_w)(p)}\\
&\quad + \frac{(f'''\circ\phi_w)(p)\cdot\phi_w'(p) \cdot (f'''\circ\phi_w)(p)\cdot\frac{\partial}{\partial w}\phi_w(p)}{(f''\circ\phi_w)(p)^2}\\
&= \frac{3}{(1+(w-1)p)^2}\\
&\quad - \frac{\partial}{\partial w}\phi_w(p)\cdot \phi_w'(p) \cdot\left(\frac{(f^{(4)}\circ\phi_w)(p)\cdot (f''\circ\phi_w)(p) - (f'''\circ\phi_w)(p) \cdot (f'''\circ\phi_w)(p)}{(f''\circ\phi_w)(p)^2}\right)\\
&\quad - \frac{(f'''\circ\phi_w)(p)\cdot\frac{\partial}{\partial w}\phi_w'(p)}{(f''\circ\phi_w)(p)}\\
&= \frac{3}{(1+(w-1)p)^2}\\
&\quad - \frac{\partial}{\partial w}\phi_w(p)\cdot \phi_w'(p) \cdot  \left(\frac{f'''}{f''}\right)'(\phi_w(p))\\
&\quad - \frac{\partial}{\partial w}\phi_w'(p)\cdot \left(\frac{f'''}{f''}\right)(\phi_w(p))\\
&= 3\frac{(w-(w-1)\phi_w(p))^2}{w^2}\\
&\quad - \frac{\phi_w(p)(1-\phi_w(p))}{w}\cdot \frac{(w-(w-1)\phi_w(p))^2}{w} \cdot H'(\phi_w(p))\\
&\quad - \frac{(1-2\phi_w(p))(w-(w-1)\phi_w(p))^2}{w^2}\cdot H(\phi_w(p))\\
&= \frac{(w-(w-1)\phi_w(p))^2}{w^2}\cdot\left(3- \phi_w(p)(1-\phi_w(p))\cdot H'(\phi_w(p) - (1-2\phi_w(p))\cdot H(\phi_w(p))\right)\\
&= \frac{(w-(w-1)\phi_w(p))^2}{w^2}\cdot G(\phi_w(p)).
\end{align*} 
In particular, evaluating at $w=1$ we get
$$
\frac{\partial F}{\partial w} (1,p) = G(p).
$$
Note also that $F(1,p)=0$ for all $p\in(0,1)$.
 
$(\Rightarrow)$ Now suppose $F\geq 0$. Then for every fixed $p\in (0,1)$ we have
\begin{align*}
\mbox{for all } w\geq 1 \quad F(w,p) \geq 0 \Rightarrow\ &\mbox{for all } w> 1 \quad \frac{F(w,p)-F(1,p)}{w-1}\geq 0\\
\Rightarrow\ & \lim_{w\rightarrow 1^+}\frac{F(w,p)-F(1,p)}{w-1}\geq 0\\
\Rightarrow\ & G(p) =\left.\frac{\partial}{\partial w}\right|_{w=1} F(w,p)\geq 0.
\end{align*}

$(\Leftarrow)$ Now suppose $G\geq 0$. Then for all $p,w$ we apply the Fundamental Theorem of Calculus and integrate over $w$ to get
\begin{align*}
F(w,p) &= F(1,p) + \int_1^w\frac{\partial F}{\partial w}(t,p)\,dt\\
&= 0 + \int_1^w\frac{(t-(t-1)\phi_t(p))^2}{t^2}\cdot G(\phi_t(p))\,dt\\
&\geq 0.
\end{align*}
\end{proof}

\begin{corollary}
Let $f$ be either the entropy or the Gini impurity. Then $f$ respects class weighting. 
\end{corollary}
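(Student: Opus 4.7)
The plan is to invoke Theorem \ref{thm:sectionlink} directly: compute $H = f'''/f''$ and then $G(p) = p(p-1)H'(p) + (2p-1)H(p) + 3$ for each of the two named impurity functions, and verify that $G \geq 0$ pointwise. Both computations turn out to be mechanical, and in fact in both cases $G$ reduces to a positive constant, so no delicate estimate is needed; the whole argument lives in checking the $C^4$ hypothesis (trivial for Gini and entropy on $(0,1)$) and then doing calculus.

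For the Gini impurity $f(p) = 2p(1-p)$, the second derivative is the constant $f''(p) = -4$, so $f'''(p) = 0$ and therefore $H \equiv 0$ and $H' \equiv 0$. Plugging into the definition of $G$ immediately gives $G(p) = 3 > 0$ for all $p \in (0,1)$.

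For the entropy $f(p) = -p\log p - (1-p)\log(1-p)$, one computes $f''(p) = -\frac{1}{p(1-p)}$ and $f'''(p) = \frac{1-2p}{p^2(1-p)^2}$, so that
\[
H(p) = \frac{f'''(p)}{f''(p)} = \frac{2p-1}{p(1-p)}.
\]
Differentiating and simplifying yields $H'(p) = \frac{2p^2 - 2p + 1}{(p(1-p))^2}$, and then a direct expansion (using $(2p-1)^2 = 4p^2 - 4p + 1$) shows
\[
p(p-1)H'(p) + (2p-1)H(p) = \frac{-(2p^2 - 2p + 1) + (2p-1)^2}{p(1-p)} = \frac{2p^2 - 2p}{p(1-p)} = -2,
\]
so $G(p) = -2 + 3 = 1 > 0$ on $(0,1)$. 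In both cases $G \geq 0$, and applying Theorem \ref{thm:sectionlink} concludes the proof.

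The only place this could conceivably get stuck is in the entropy algebra above: one must be careful with the signs when simplifying $H'(p)$ and combining the two terms of $G$. But the arithmetic is short, the cancellations are clean, and no case analysis is needed — the fact that $G$ comes out to a positive constant in each case is really the content here.
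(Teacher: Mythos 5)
Your proof is correct and follows exactly the paper's route: invoke Theorem \ref{thm:sectionlink} and compute $G\equiv 3$ for the Gini impurity and $G\equiv 1$ for the entropy, values which match those stated in the paper. The algebra you carry out explicitly (the paper leaves it as ``a computation'') is accurate.
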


\begin{proof}
For the cases of entropy and Gini impurity, we apply Theorem \ref{thm:sectionlink} and compute $G\equiv 1$ and $G\equiv 3$, respectively.
\end{proof}

\begin{remark}
Both of the cases of the entropy and Gini impurity respecting class weighting follow just as easily without Theorem \ref{thm:sectionlink} using Lemma \ref{lem:simplerespect}, Theorem \ref{thm:mainthm}, and (\ref{eq:Twfconcave}). Nevertheless, despite
the condition in Theorem \ref{thm:sectionlink} being somewhat messy, it is still an improvement over Lemma \ref{lem:simplerespect} in the sense that Theorem \ref{thm:sectionlink} reduces verification of Definition \ref{def:respectweight} to verification of nonnegativity
of a univariate function on the unit interval. 
\end{remark}
  
\begin{remark}
The impurity function $f(p)=p-p^3$ that we have been using in examples throughout this paper also respects class weighting, as do $f(p)=p-p^\alpha$ for $\alpha> 1$ and $f(p)=p^\alpha-p$ for $0<\alpha< 1$. In these cases, we apply Theorem \ref{thm:sectionlink}
and compute $G\equiv \alpha+1$.

For an example of a preimpurity function that does not respect class weighting, consider the preimpurity function (in fact, symmetric impurity function) $f(p) = 1 - 3(p-\frac{1}{2})^2 - 4(p-\frac{1}{2})^4$. Then using Theorem \ref{thm:sectionlink} we check that $G(1/2)<0$. Alternatively,
one can directly show that $f$ fails to split more positively purely than $T_2f$ using Theorem \ref{thm:mainthm}.
\end{remark}

Another very noteworthy example of an impurity function that respects class weighting is $f(p) = \sqrt{p(1-p)}$, considered in \cite{KM96} and shown there to satisfy certain error bounds. It was also shown in \cite{DH00} to be cost-insensitive, i.e., insensitive to class weighting.
For this particular $f$, we compute $T_wf = \sqrt{w}\cdot f$, so that $T_{w_1}f$ is actually equivalent to $T_{w_2}f$ for all $w_1,w_2$. In other words, class weighting doesn't change the optimal splits at all when using this impurity function.
This is indeed in agreement with \cite{DH00}.

In fact, we can revisit the proof of Theorem \ref{thm:sectionlink} to also characterize all cost-insensitive impurity functions. First, let us define cost-insensitivity in terms of the framework we have built so far:
\begin{definition}\label{def:costinsensitive}
Let $f$ be a preimpurity function. We say $f$ is {\it cost-insensitive} if $f$ is equivalent to $T_wf$ for all $w>0$.
\end{definition}

Now by Corollary \ref{cor:equivalence}, $f$ is cost-insensitive if and only if for all $w>0$ the function $f''/(T_wf)''$ is constant. Revisiting the definition of $F$ in the proof of Theorem \ref{thm:sectionlink}, we see that this is equivalent to $F\equiv 0$ on its domain. But this
is easily seen (again, by revisiting the proof of Theorem \ref{thm:sectionlink}) to be equivalent to $G\equiv 0$. In other words, $f$ is cost-insensitive if and only if $f$ satisfies the ODE
$$
p(p-1)H'(p)+(2p-1)H(p)+3=0
$$
where we recall $H=f'''/f''$. Now the solution to the above ODE is
$$
H(p)=\frac{3p+C_1}{p(1-p)}, \qquad p\in(0,1)
$$  
where $C_1$ is a constant. Since $H=f'''/f''=\log(-f'')'$ we integrate and exponentiate both sides of the above equality to obtain
\begin{align*}
f''(p) &= C_2\exp(C_1\log p - (C_1+3)\log(1-p))\\
&= C_2\cdot p^{C_1}(1-p)^{-C_1-3}.
\end{align*}
Integrating twice more and absorbing and relabeling constants we get
$$
f(p) = C_2\cdot p^{C_1+2}(1-p)^{-C_1-1} +C_3p+C_4.
$$
Requiring that our preimpurity function be continuous on the closed interval $[0,1]$ gives $-2<C_1<-1$. Imposing further that $f(0)=f(1)=0$ gives $C_3=C_4=0$. Finally, letting $C_2=1$ and $\alpha = C_1+2$ we get
$$
f(p) = p^\alpha(1-p)^{1-\alpha}, \qquad 0<\alpha<1.
$$
We have just proved the third and final main theorem of this section:
\begin{theorem}\label{thm:insensitive}
Let $f$ be an impurity function, and suppose $f$ is $C^4$ on $(0,1)$. Then $f$ is cost-insensitive if and only if $f$ is a positive scalar multiple of one of the functions in the family $\{f_\alpha\}$ given by
$$
f_\alpha(p)=p^\alpha(1-p)^{1-\alpha}, \qquad \alpha\in(0,1).
$$
\end{theorem}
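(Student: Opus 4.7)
The plan is to reduce cost-insensitivity to an ODE for $f$, solve the ODE explicitly, and verify the reverse direction by direct computation.

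For the forward direction, I would first apply Corollary \ref{cor:equivalence} to rephrase cost-insensitivity as: for every $w>0$, the ratio $f''/(T_wf)''$ is constant in $p$, equivalently $\log(f''/(T_wf)'')$ has zero $p$-derivative. This is precisely the statement $F\equiv 0$, where $F$ is the function defined in the proof of Theorem \ref{thm:sectionlink}; using $T_{1/w}T_w=\mathrm{id}$, it suffices to check $w\ge 1$. Since $F(1,p)=0$ and the formula for $\partial F/\partial w$ derived there is a positive prefactor times $G(\phi_w(p))$, we have $F\equiv 0$ iff $G\equiv 0$ on $(0,1)$ (the forward direction by differentiating in $w$ at $w=1$, the reverse by integrating $\partial F/\partial w$ from $1$ to $w$).

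Next, I would solve the ODE $G\equiv 0$, i.e.\ $p(p-1)H'(p)+(2p-1)H(p)+3=0$. The left side is $(p(p-1)H(p))'+3$, which integrates immediately to $p(p-1)H(p)=-3p+C$, and partial fractions give $H(p)=C_1/p+(C_1+3)/(1-p)$. Since $H=(\log(-f''))'$, integrating and exponentiating yields $f''(p)=A\,p^{C_1}(1-p)^{-C_1-3}$ for some constant $A$, and integrating twice more produces
\[
f(p)=A'\,p^{C_1+2}(1-p)^{-C_1-1}+C_3 p+C_4.
\]
Continuity of $f$ on $[0,1]$ forces both exponents to be positive, hence $-2<C_1<-1$; the impurity conditions $f(0)=f(1)=0$ then force $C_3=C_4=0$; and $f''<0$ forces $A'>0$. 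Setting $\alpha=C_1+2\in(0,1)$ gives $f=A'\,f_\alpha$, as desired.

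For the reverse direction, a short direct computation suffices. Using $1-\phi_w(p)=(1-p)/(1+(w-1)p)$ one finds $(f_\alpha\circ\phi_w)(p)=w^\alpha\,p^\alpha(1-p)^{1-\alpha}/(1+(w-1)p)$, so $T_w f_\alpha=w^\alpha f_\alpha$. Being a positive scalar multiple of $f_\alpha$, this is equivalent to $f_\alpha$ by Corollary \ref{cor:equivalence}, hence $f_\alpha$ is cost-insensitive. The main obstacle is the equivalence $F\equiv 0 \iff G\equiv 0$, which really leans on the explicit formula for $\partial F/\partial w$ worked out in the proof of Theorem \ref{thm:sectionlink}; once that is in hand, the remaining work -- a linear first-order ODE plus four boundary constraints -- is routine.
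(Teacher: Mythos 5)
Your proposal is correct and follows essentially the same route as the paper: it uses Corollary \ref{cor:equivalence} and the function $F$ from the proof of Theorem \ref{thm:sectionlink} to reduce cost-insensitivity to $G\equiv 0$, solves the resulting ODE for $H=f'''/f''$, and imposes continuity and the boundary conditions to land on $f_\alpha(p)=p^\alpha(1-p)^{1-\alpha}$. The explicit check $T_wf_\alpha=w^\alpha f_\alpha$ for the converse matches the paper's remark, so nothing further is needed.
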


\begin{remark}
A direct computation shows that for the above family we have $T_wf_\alpha=w^\alpha\cdot f_\alpha$, which is consistent with the backward implication in Theorem \ref{thm:insensitive}. Also, observe that for $\alpha,\beta\in(0,1)$ we compute
$$
\frac{f_\alpha''}{f_\beta''}(p)=\frac{\alpha(1-\alpha)}{\beta(1-\beta)}\cdot \left(\frac{1}{p}-1\right)^{\beta-\alpha}
$$
so that $f_\alpha$ splits more positively purely than $f_\beta$ if and only if $\alpha\geq\beta$. 
\end{remark}

\section{Some Remarks on the Axioms of Impurity Functions}\label{sec:impaxioms}

We conclude by summarizing some remarks made earlier in this paper on the axioms of an impurity function as typically given in the literature, stated at the top of Section \ref{sec:mainsec}. Recall those axioms:
\begin{enumerate} 
\item $f(p)$ is maximized only at $p=1/2$;
\item $f(p)$ is minimized only at the endpoints $p=0,1$;
\item $f$ is symmetric, i.e., $f(p)=f(1-p)$.
\end{enumerate}
As Corollary \ref{cor:standardform} shows, Axiom 2 is not necessary for good splitting behavior although there is no loss of generality in assuming Axiom 2. Furthermore,
even under the assumption that $f(0)=f(1)=0$, Axioms 1 and 3 are still not necessary for good splitting behavior; indeed, Theorem \ref{thm:weightequiv}
shows that asymmetric impurity functions are, in many cases, equivalent to symmetric impurity functions under class weighting.

The one property we did emphasize in our definition of impurity function is concavity. Indeed, while concavity is not explicitly stated as one of the axioms of an impurity function above, strict concavity is
typically additionally imposed upon (or implicitly satisfied by) the impurity functions under consideration. The reason for this is to ensure that total impurity is decreased by splitting a node \cite{BFOS84}. For completeness,
we present a full argument below.

Consider the following example. Let $f(p) = p^4(1-p)^4$. Then $f$ satisfies
Axioms 1-3 but is not concave. Now place two points of Class 0 and one point of Class 1, each with unit weight, on the real line in the order `010'. Then the impurity of this set is $3f(1/3) = 16/2187\approx .0073$. But the two nontrivial splits \{`01',`0'\} and \{`0',`10'\}
each have impurity equal to $1f(0)+2f(1/2) = 1/128\approx .0078$, giving an {\it increase} in impurity, causing our node to become ``stuck" and unable to split.

A property that an impurity function ought to have is that making a split should never increase total impurity; or, using the entropy/information gain heuristic, one should never lose information by splitting a node. We state this precisely below:

\begin{definition}
We say a function $f$ on $[0,1]$ is {\it proper} if for every node $n$ and every split of $n$, the total impurity of that split with respect to $f$ is less than or equal to the impurity of $n$ with respect to $f$. In other words, $f$ is proper if
for all $c\in(0,1)$ and all $(a,b)\in([0,c)\times(c,1])\cup\{(c,c)\}$ we have
$$
\frac{b-c}{b-a} f(a) + \frac{c-a}{b-a} f(b) \leq f(c).
$$
\end{definition}

With this definition it is easy to see that the property of being proper is just a slight rephrasing of concavity, making the following proposition immediate:
\begin{proposition}
$f$ is proper if and only if $f$ is concave.
\end{proposition}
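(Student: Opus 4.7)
The plan is to observe that the inequality defining ``proper'' is literally the classical chord-under-graph characterization of concavity, so the proposition reduces to unpacking definitions on both sides.

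For the ``concave implies proper'' direction, I would fix any $c\in(0,1)$ and any $(a,b)\in([0,c)\times(c,1])\cup\{(c,c)\}$. In the degenerate case $(a,b)=(c,c)$ the inequality is an equality by the convention established in Section \ref{sec:conventions}. Otherwise $a<c<b$, and setting $\lambda=(b-c)/(b-a)\in(0,1)$ gives $c=\lambda a+(1-\lambda)b$, so concavity of $f$ immediately yields
$$
f(c)\ \geq\ \lambda f(a)+(1-\lambda)f(b)\ =\ \frac{b-c}{b-a}f(a)+\frac{c-a}{b-a}f(b),
$$
which is exactly the proper condition.

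For the ``proper implies concave'' direction, I would take arbitrary $x,y\in[0,1]$ and $t\in[0,1]$ and verify $f(tx+(1-t)y)\geq tf(x)+(1-t)f(y)$. The cases $t\in\{0,1\}$ or $x=y$ are trivial, so assume WLOG $x<y$ and $t\in(0,1)$, and set $c=tx+(1-t)y$. Then $c\in(x,y)\subseteq[0,1]$ and in fact $c\in(0,1)$ since $x<c<y$ forces $c$ strictly interior to $[0,1]$. Taking $a=x$ and $b=y$ gives $(a,b)\in[0,c)\times(c,1]$, so the proper hypothesis applies. Noting that $t=(b-c)/(b-a)$ and $1-t=(c-a)/(b-a)$, the proper inequality is precisely the desired concavity inequality.

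I do not anticipate any genuine obstacle here; the main thing to be careful about is the boundary bookkeeping, namely that the proper condition requires $c\in(0,1)$ while concavity must be verified on the closed interval $[0,1]$. This is handled by noting that any nontrivial convex combination of points in $[0,1]$ lies in the open interval $(0,1)$, and the degenerate combinations are covered by the Section \ref{sec:conventions} convention. The whole argument is a few lines.
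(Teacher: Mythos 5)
Your proof is correct and matches the paper's treatment: the paper regards the proposition as immediate precisely because the ``proper'' inequality is the chord-under-graph formulation of concavity, which is exactly what you unpack in both directions. Your boundary bookkeeping (the $(c,c)$ convention and the observation that a nontrivial convex combination of points of $[0,1]$ lies in $(0,1)$) is the only content beyond the definitions, and it is handled correctly.
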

One usually also desires that the impurity function should be nondegenerate in the sense that impurity should strictly decrease (i.e., information gain should be positive) if the split is nontrivial, i.e., $a<c<b$. This is easily seen to be equivalent to strict concavity of $f$.

\end{document}